\def\th@definition{
  \normalfont }
\title{
    \benchmarkNameNemph{}\includegraphics[scale=0.065]{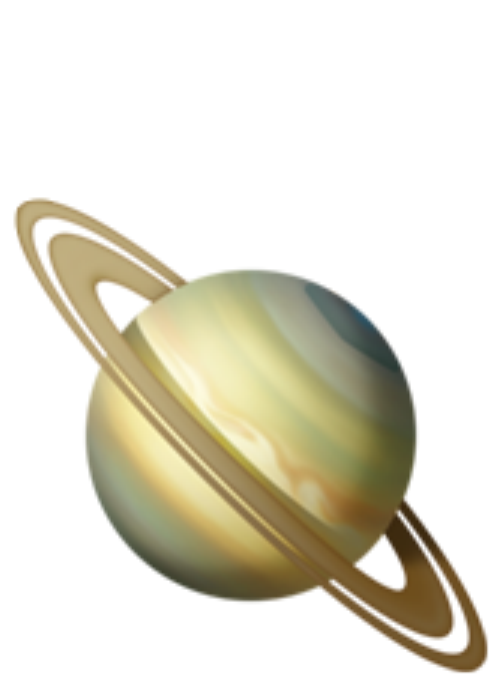}: 
    A Rigorous Benchmark for \\ Translating Text to Structured Planning Languages
}
\author{
    \textbf{Max Zuo}\thanks{These authors contributed equally to this work.} \quad
    \textbf{Francisco Piedrahita Velez} \footnotemark[1] \quad
    \textbf{Xiaochen Li} \quad \\
    \textbf{Michael L.~Littman} \quad
    \textbf{Stephen H.~Bach} \\
    $^{1}$Department of Computer Science, Brown University \\
    \texttt{ \{zuo, francisco, xiaochen\_li, michael\_littman, stephen\_bach\}@brown.edu }
}
\begin{document}

\maketitle
 \begin{abstract}
    Recent works have explored using language models for planning problems. 
    One approach examines translating natural language descriptions of planning tasks into structured planning languages, such as the planning domain definition language (PDDL). 
Existing evaluation methods struggle to ensure semantic correctness and rely on simple or unrealistic datasets.
To bridge this gap, we introduce \benchmarkName, a benchmark designed to evaluate language models' ability to generate PDDL code from natural language descriptions of planning tasks. 
    \benchmarkName{} features a novel PDDL equivalence algorithm that flexibly evaluates the correctness of generated PDDL, along with a dataset of 145,918 text-to-PDDL pairs across 73 unique state combinations with varying levels of difficulty. 
    Finally, we evaluate several API-access and open-weight language models that reveal this task's complexity.
    For example, 96.1\% of the PDDL problem descriptions generated by GPT-4o are syntactically parseable, 94.4\% are solvable, but only 24.8\% are semantically correct, highlighting the need for a more rigorous benchmark for this problem.
\end{abstract} 
\section{Introduction}

\begin{figure}[t!]
    \centering
\includegraphics[width=0.98\linewidth]{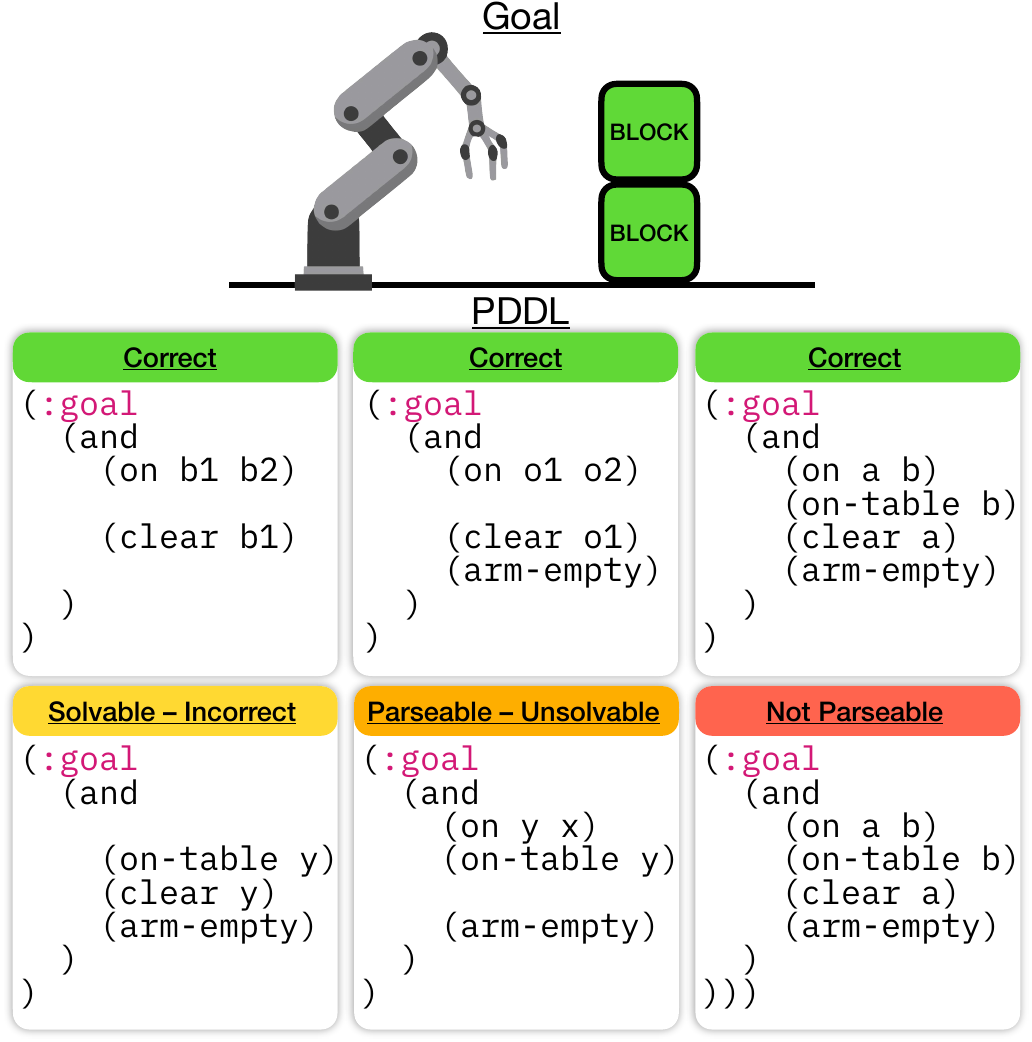} \caption{An example of one planning goal corresponding to many correct PDDL goals. All PDDL goals in the top row represent the displayed goal correctly. The bottom row illustrates PDDL goals with different error types, showing instances that are solvable (a planner can generate a plan, but for a different planning problem), parseable (the PDDL syntax is correct but will not produce any plan from a planner), and not parseable (it is not valid PDDL).
    See Section~\ref{sec:eval} for details.}
    \label{fig:states-to-pddl}
\end{figure}

Recently, there has been growing interest in using large language models (LLMs) to solve planning problems.
Some research has focused on generating plans directly with LLMs \cite{VALMEEKAM-2023-PLANBENCH, VALMEEKAM-2023-LLM-CAN-NOT-PLAN, VALMEEKAM-2023-PLANNING-ABILITIES-OF-LLM, SILVER-2022-PDDL-PLANNING-LLM, SILVER-2023-GENERALIZED-PLANNING-LLM}.
However, this approach has shown limited success; GPT-4 only achieves 35\% accuracy on simple planning problems \cite{VALMEEKAM-2023-PLANBENCH}.
Another line of research uses LLMs to convert natural language prompts into structured planning languages, such as the planning domain definition language \cite{LIU-2023-LLM+P, XIE-2023-PLANNING-GOALS, GUAN-2023-LLM-WORLD-MODELS, CHALVATZAKI-2023-LLM-PLANNING-SCENE-GRAPHS, YANG-2023-LLMS-LOGIC-PROGRAMMING}.
Early evidence suggests this method performs better than generating plans directly with LLMs \cite{LIU-2023-LLM+P}.
Despite its promise, there are no rigorous techniques or benchmarks for evaluating the translation of natural language planning descriptions to PDDL.

Translating natural language to PDDL enables a hybrid, best-of-both-worlds approach.
The LLM is responsible for interpreting natural language, and the resulting PDDL can be given to traditional, symbolic planners that have been developed over decades \cite{FIKES-1971-STRIPS, MCDERMOTT-1998-PDDL, HELMERT-2006-FAST-DOWNWARD, BAIER-2009-HEURISTIC-SEARCH-PLANNING, VIDAL-2006-BRANCHING-AND-PRUNNING}.
These traditional planners are efficient and ensure the correctness of their solutions. 
In contrast, leaving the planning to LLMs does not guarantee correctness, making them unreliable for critical applications.
Despite this shortcoming, LLMs are an enticing approach, as using traditional planners requires domain expertise and expertise in modeling planning problems.
By using LLMs to translate natural language into PDDL, we can better leverage the strengths of existing planners.

Using LLMs to translate natural language to PDDL is an instance of a code generation task.
Evaluating code generation tasks, in general, is highly challenging.
Some benchmarks for code generation use match-based metrics that look for segments that overlap with ground truth code \cite{PAPINENI-2002-BLEU, LIN-2004-ROUGE, REN-2020-CODEBLEU}, but match-based metrics cannot cover the vast space of equivalent programs \cite{CHEN-2021-LLM-CODE}.
Therefore, benchmarks often test functional correctness using a suite of unit tests \cite{ROZIERE-2020-UNSUPERVISED-CODE-GEN,KULAL-2019-SPOC}.
For planning problems, existing work uses ``plan validators'' to check if the generated code can be solved with a traditional planner \cite{LIU-2023-LLM+P, SILVER-2022-PDDL-PLANNING-LLM, SILVER-2023-GENERALIZED-PLANNING-LLM, GUAN-2023-LLM-WORLD-MODELS}.
We argue using validators alone is insufficient to determine if PDDL generation is correct.
This is because the LLM can generate valid PDDL that has \emph{nothing} to do with the user's instructions and still be considered correct---a false positive.

A rigorous evaluation of LLMs as generators of structured planning languages requires a precise definition of what it means for generated code to be correct.
This is hard because many instances of PDDL can represent the same planning problem, but it is not always obvious when they are equivalent (Figure~\ref{fig:states-to-pddl}).
Properly checking PDDL equivalence requires symbolic interpretation.

To address this challenge, we introduce \benchmarkName, a benchmark to evaluate LLMs on translating natural language descriptions of planning problems into PDDL. Our contributions are as follows:

\paragraph{Rigorous Evaluation of PDDL Equivalence.} We formally define planning problem equivalence and create an algorithm for checking whether two PDDL problems satisfy this definition.
This algorithm transforms the PDDL code into scene graphs, computes an expansion of the goal states for both PDDL problems, and then performs isomorphism checks between the graphs.
Our method ensures two PDDL problems match if and only if they represent the same underlying planning task.
We show how to make this algorithm efficient for three domains: Blocks World, Gripper, and a slightly simplified Floor Tile \cite{MCDERMOTT-2000-IPC,LOPEZ-2015-IPC-7}.

\paragraph{Benchmark Data for PDDL Generation.}
We present a dataset based on the International Planning Competition (IPC) \cite{MCDERMOTT-2000-IPC, VALLATI-2014-IPC, TAITLER-2024-IPC, SEIPP-2022-PDDL-GEN}, crafting 145,918 ground truth PDDL problems and corresponding text descriptions capturing a range of planning problems.
Each task varies in two dimensions to assess the difficulty of PDDL generation: abstraction and size.

\paragraph{Broad Evaluation of Current LLMs.} Finally, we evaluate a range of API-access and open-weight LLMs on \benchmarkName.
We evaluate in both a zero-shot setting and after fine-tuning.
We find that this task is very challenging.
GPT-4o in a zero-shot setting gets only 24.8\% correct.
Instances with abstract descriptions or many propositions are particularly challenging.
\benchmarkName\ can, therefore, serve as a benchmark of progress on this important problem.
To support future development and evaluation of LLMs, we release all the code and data for \benchmarkName\ at \href{https://github.com/BatsResearch/planetarium}{\texttt{github.com/BatsResearch/planetarium}}.
 \section{Related Work}
    
\noindent \textbf{LLMs as Planners.}
There is growing interest in using LLMs to solve planning problems.
Several papers have demonstrated the potential of using LLMs for decision-making~\cite{SHARMA-2022-SKILL-INDUCTION-PLANNING, REN-2023-UNCERTAINTY-ALIGNMENT-LLM-PLANNERS, ICHTER-2022-GROUNDING-LANGUAGE, SING-2022-PROG-PROMPT}.
Some techniques that enable the use of LLMs as planners involve decomposing abstract tasks into concrete, atomic executable actions for an agent to perform~\cite{HUANG-2022-INNER-MONOLOGUE, HUANG-2022-LLS-ZERO-SHOT-PLANNERS, SHARMA-2022-SKILL-INDUCTION-PLANNING}.
Other approaches generate plan actions by scoring the possible next steps in a sequence of actions~\cite{ICHTER-2022-GROUNDING-LANGUAGE, REN-2023-UNCERTAINTY-ALIGNMENT-LLM-PLANNERS}.
When using LLMs to generate plans directly, the LLM is given a natural language representation of a planning problem with the goal of generating a plan. 
We refer to this line of work as ``\emph{LLMs as planners}.''
One of the main findings is that LLMs have limited ability to generate and validate plans on their own, even for simple planning tasks~\cite{VALMEEKAM-2023-LLM-CAN-NOT-PLAN, SILVER-2022-PDDL-PLANNING-LLM, VALMEEKAM-2023-PLANNING-ABILITIES-OF-LLM}. 

\paragraph{Planner-Augmented LLMs.}
Given LLMs' poor performance in classical planning tasks \cite{VALMEEKAM-2023-PLANBENCH, VALMEEKAM-2023-LLM-CAN-NOT-PLAN, SILVER-2022-PDDL-PLANNING-LLM}, new approaches extend LLMs by incorporating classical planning techniques \cite{LIU-2023-LLM+P, GUAN-2023-LLM-WORLD-MODELS}. 
Some techniques frame the problem as a machine translation task, transforming a natural language text description of a planning problem into PDDL to use with classical planners \cite{LIU-2023-LLM+P,  GUAN-2023-LLM-WORLD-MODELS, XIE-2023-PLANNING-GOALS}.
We refer to this line of work as ``\emph{Planner-Augmented LLMs}.''
Other similar approaches translate natural language into alternative representations of the planning problem, such as finite state automata \cite{YANG-2023-FSA-LLM} or logic programming \cite{YANG-2023-LLMS-LOGIC-PROGRAMMING}, to solve them.

\paragraph{Benchmarking LLMs on Planning Tasks.}
Significant efforts have been made to develop benchmarks to assess the use of ``\emph{LLMs as Planners}.''
PlanBench~\cite{VALMEEKAM-2023-PLANBENCH} evaluates LLMs on various planning-related tasks, including plan generation, cost-optimal planning, plan verification, and others. 
Their work focuses on evaluating LLMs on their ability to generate plans.
In contrast, \benchmarkName\ focuses on evaluating  ``\emph{Planner-Augmented LLMs}''.
Here, the goal is to determine whether an LLM can successfully translate natural language descriptions of planning problems into the correct PDDL representations of those problems.
This is arguably a more natural use of LLMs, but it is harder to evaluate because the output is an internal representation and not the end result of the planning process.

\paragraph{Planning problem equivalence.} There is also work exploring techniques for determining the equivalence of planning problems, such as \citet{SHRINAH-2021-D-VAL}.
\citet{CHRPA-2023-PLANNING-DOMAIN} and \citet{SIEVERS-2021-LIFTED-PDDL-TASKS} use a lifted graph representation of the planning domain, with the former specifically using graph isomorphism to check equivalence. Planetarium also employs graph isomorphism, but our focus is on evaluating the equivalence of PDDL problem instances rather than domains. Since domains and problem instances differ, the representation and isomorphism checks must also be adapted accordingly.

 \section{Preliminaries}

To present \benchmarkName, we first introduce planning, PDDL, and scene graphs.

\subsection{Classical Planning Problems}

We use the set-theoretic form of classical planning problems~\cite{GHALLAB-2004-UTOMATED-PLANNING}.

\begin{definition}
    \label{def:classical-planning}
    A planning problem $\planningProblem$ is denoted by the tuple $(\propositionSymbolSpace, \stateSpace, \actionSpace, \transitionFunction, \initialState, \goalPropositions)$, where:
    
    \begin{itemize}
        \item $\propositionSymbolSpace$ is a finite set of proposition symbols representing different facts about the world.
        \item $\stateSpace \subseteq 2^{\propositionSymbolSpace}$ is a set of states. Each state $\state \subseteq \propositionSymbolSpace$ is the set of true propositions in that state.
        \item $\actionSpace$ is the set of actions. Each action $a$ is a triplet $(\actionPreconditions[a],~\actionPositiveEffects[a],~\actionNegativeEffects[a]$, where $\actionPreconditions[a]$ is the set of preconditions that must be satisfied for $a$ to be executed, $\actionPositiveEffects[a]$ is the set of positive effects that become true after executing $a$, and $\actionNegativeEffects[a]$ is the set of negative effects that become false after executing $a$.
        \item The transition function $\gamma(s, a) = (s - \texttt{del}(a)) \cup \texttt{add}(a)$ models how the world changes by an action.
        \item $\initialState$ is the initial state of the world from which the problem begins.
        \item $\goalPropositions \subseteq \propositionSymbolSpace$ represents the goal propositions, indicating which propositions must be true for a state to be considered a goal state. The set of goal states is $\goalStates = \{ \state \in \stateSpace \; | \; \goalPropositions \subseteq \state \}$.
    \end{itemize}
\end{definition}

A plan $\plan = < \action[1], \action[n], \ldots, \action[n] >$ is a sequence of actions that leads from $\initialState$ to any state in $\goalStates$ following $\transitionFunction$. This plan is a \emph{solution} to $\planningProblem$.

\subsection{Planning Domain Definition Language}

PDDL is a specialized language designed to provide a unified way to represent planning problems. It can represent various types of planning problems, including classical planning problems~\cite{MCDERMOTT-2000-IPC, GEREVINI-2006-PDDL3}. A PDDL planning problem consists of two files: the domain file, which describes the constant parts of the world model, and a problem file, which specifies particular planning scenarios using the world model outlined in the domain file.

The domain file specifies the set of possible actions $\actionSpace$ and their preconditions and effects, which collectively define the transition function $\transitionFunction$.
The problem file defines the initial state $\initialState$ and the goal propositions $\goalPropositions$.
Finally, the set of proposition symbols $\propositionSymbolSpace$ is created by combining the predicates from the domain file with the objects defined in the problem file.

\benchmarkName\ focuses on classical planning problems within the STRIPS subset of PDDL~\cite{MCDERMOTT-1998-PDDL, FIKES-1971-STRIPS}. 
STRIPS provides the basic grammar for describing actions by specifying a set of preconditions that must be met for an action to be applicable and a set of effects that modify the propositions that are true after the action's execution.  
The expressiveness of problems defined in STRIPS is equivalent to those characterized by the set-theoretic definition (Definition~\ref{def:classical-planning}) of classical planning problems~\cite{GHALLAB-2004-UTOMATED-PLANNING}.

\subsection{Scene Graphs}
\label{sec:scene-graph}

To compare different planning problems, we use scene graphs. 
A scene graph is a data structure commonly used in fields such as computer vision and graphics~\cite{JOHNSON-2015-IMAGE-RETRIEVAL-SCENE-GRAPHS, CHANG-2023-SCENE-GRAPHS-SURVEY}, rearrangement~\cite{RAMACHANDRUNI-2023-CONSOR}, and planning to represent objects, their attributes, and the relationships among them.
In our work, we define scene graphs as directed graphs with types and attributes for both nodes and edges.
We represent a PDDL problem file with scene graphs as follows.
We create one scene graph (the \emph{initial scene}) for the initial state and another for the set of goal propositions (the \emph{goal scene}).
For every object, we create a node with an \texttt{object} type.
Then, for every proposition that is listed in the problem file, we create a node with a \texttt{proposition} type.
That node is given an attribute with the name of its predicate.
Then, for each argument to the predicate in that proposition, we add an edge from the proposition node to the corresponding object.
Each edge is given three attributes: the name of the predicate, the position of the argument (first, second, etc.), and whether it is defined in the initial state or the goal propositions.
A scene graph is thus $(\{O \cup P\}, E)$, where $O$ is the set of nodes with \texttt{object} type, $P$ is the set of nodes with \texttt{Proposition} type, and $E$ is the set of edges.

\newcommand{\SceneGraph}{\textit{\textrm{SceneGraph}} }
\newcommand{\init}{\textit{\textrm{init}} }
\newcommand{\goal}{\textit{\textrm{goal}} }
\newcommand{\ProblemGraph}{\textit{\textrm{ProblemGraph}} }

We further define a \emph{problem graph} as the combination of an initial scene and goal scene.
Given $\SceneGraph_{\init} = (\{O \cup P_{\init} \}, E_{\init})$ and $\SceneGraph_{\goal} = (\{O \cup P_{\goal}\}, E_{\goal})$, then a problem graph merges the scene graphs such that $\ProblemGraph = (\{O \cup P_{\init} \cup P_{\goal}\}, \{E_{\init} \cup E_{\goal}\})$.
We define graph isomorphism on any of these graphs as an edge and type-preserving bijection between nodes, meaning that two graphs share a connectivity structure where all types and attributes match.
See Appendix~\ref{app:examples} for examples of diagrams of scene and problem graphs. \section{Evaluation Framework}

In this section, we describe the design of our benchmark.
\benchmarkName\ consists of two components: an algorithm that validates whether two PDDL problem files, a ground truth file and an LLM-generated file, are equivalent, and a curated dataset of planning tasks against which an LLM can be evaluated. \subsection{Planning Problem Equivalence} \label{sec:planning}
The first step to benchmarking PDDL generation is determining how to decide whether the generated code matches ground truth code.
One might assume that checking if two PDDL problem files are equivalent is straightforward.
However, the same goal state could be represented by many PDDL problem files, as shown in Figure~\ref{fig:states-to-pddl}.

Given these difficulties, we propose a definition of equivalence in terms of classical planning problems.
The main idea is to find a \emph{bijective function} between the sets of proposition symbols $\propositionSymbolSpace$ of the two problems such that makes the two problems equal.
Our definition assumes that the transition function $\transitionFunction$ is shared between the two problems.
Our formal definition of equivalence between two planning problems follows.

\begin{definition}
    \label{def:equivalence}
    Two planning problems $\planningProblem[1] = (\propositionSymbolSpace[1], \stateSpace[1], \actionSpace[1], \transitionFunction, \initialState[1], \goalPropositions[1])$ and $\planningProblem[2] = (\propositionSymbolSpace[2], \stateSpace[2], \actionSpace[2], \transitionFunction, \initialState[2], \goalPropositions[2])$ with the same transition function $\transitionFunction$ are \textbf{equivalent} if there exists a bijective function $\isomorphicMapping: \propositionSymbolSpace[1] \to \propositionSymbolSpace[2]$ such that:
    \begin{enumerate}
        \item $\stateSpace[2] = \{ \{\isomorphicMapping(\propositionSymbol) : \propositionSymbol \in \state  \} : \state \in \stateSpace[1] \}$ \label{itm:property-1}
        \item $\initialState[2] = \{\isomorphicMapping(p) : p \in \initialState[1] \}$ \label{itm:property-2}
        \item $\goalStates[2] = \{ \{\isomorphicMapping(\propositionSymbol) : \propositionSymbol \in \state  \} : \state \in \goalStates[1] \}$ \label{itm:property-3}
        \item \hspace{-0.5em}\begin{minipage}[t]{\linewidth}
            \vspace{-0.7\baselineskip}
            $\begin{array}{lcl}
                \actionSpace^2 = \{ ( &\{\isomorphicMapping(p): p \in \texttt{pre}(\action)\}&, \\ 
                                      &\{\isomorphicMapping(p): p \in \texttt{add}(\action)\}&, \\
                                      &\{\isomorphicMapping(p): p \in \texttt{del}(\action)\}&) : \action \in \actionSpace^1 \} .
            \end{array}$  
            \end{minipage} \label{itm:property-4} 
    \end{enumerate}
\end{definition}

\begin{figure}[t!]
    \centering
    \includegraphics[width=0.96\linewidth]{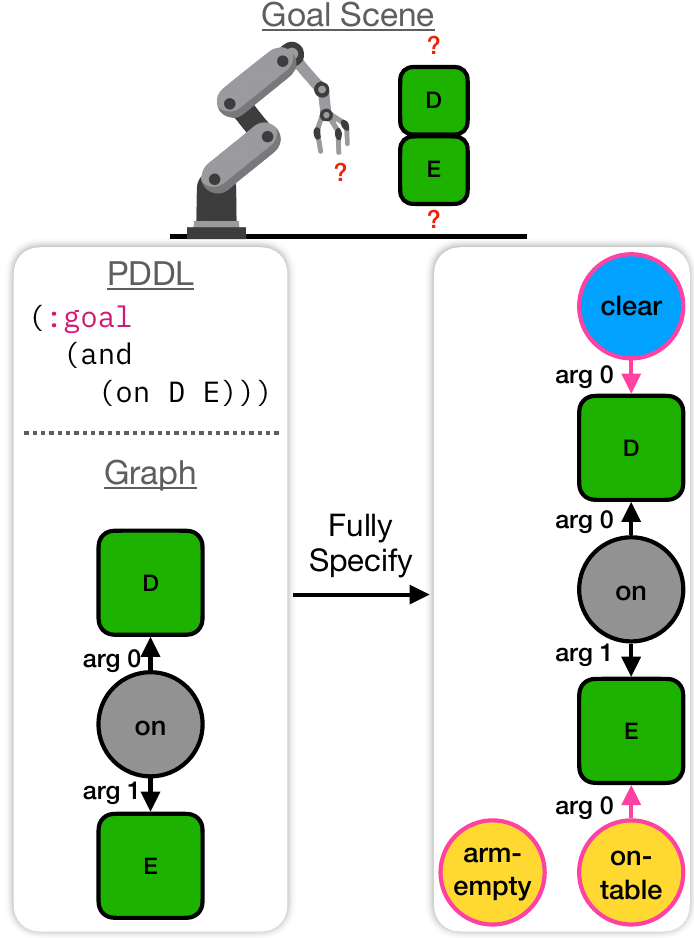}
    \caption{A demonstration of how \texttt{fullySpecify} fills in gaps of the goal state of a planning problem. 
}
    \label{fig:fully-specify}
\end{figure}

That is, all predicates across the states, action definitions, initial state, goal specification, and actions of one problem are replaced with their corresponding predicates from the other.

This definition is not directly usable for checking equivalence between two PDDL problem files because it relies on finding a bijection between the sets $\propositionSymbolSpace$ of each problem, and PDDL does not define these sets directly.
To build the set $\propositionSymbolSpace$ with PDDL, one needs to use the predicates in the domain file and the objects in the problem file, instantiating each predicate with all possible objects they might take.
We assume the entire PDDL domain is shared between the problems.
This fact entails that the predicates will be shared, making it necessary to look for bijective functions for PDDL only over the objects.
The challenge is that each PDDL problem file can correspond to many pairs of initial and goal states because the set of goal propositions makes an open-world assumption.
That set can leave implicit trivial propositions that are necessarily true and therefore do not change the underlying planning problem.
We must therefore \emph{fully specify} the goal, meaning that we identify all propositions that are true in all reachable goal states when starting from the initial state.

Our algorithm for checking equivalence is summarized in Algorithm~\ref{alg:planning-problem-equivalency}.
First, we transform each set of initial state and goal state propositions into scene graphs.
Second, we fully specify the goal scene graphs by adding all trivially true edges.
Finally, we join the initial state scene graph with each goal state graph to create problem graphs and look for a bijection between objects such that the problem graphs are isomorphic.

\begin{algorithm}
    \caption{Planning Problem Equivalence} 
    \label{alg:planning-problem-equivalency}
    \begin{algorithmic}[1]
        \Function{equivalent}{$\text{P}_a, \text{P}_b, \text{isPlaceholder}$}
            \State $\text{s}_{i,a}, \text{g}_a \gets \text{toSceneGraph}(\text{P}_a)$
            \State $\text{s}_{i,a}, \text{g}_a \gets \text{toSceneGraph}(\text{P}_b)$

            \If{$\text{canDoFast}(\text{s}_{i,a}, \text{g}_a, \text{s}_{i,b}, \text{g}_b)$}
                \State \Return $\text{fastEquivalent}(\text{s}_{i,b}, \text{g}_a, \text{s}_{i,a}, \text{g}_b)$
            \EndIf

            \State $\text{g}^\star_a \gets \text{fullySpecify}(\text{s}_{i,a}, \text{g}_a)$
            \State $\text{g}^\star_b \gets \text{fullySpecify}(\text{s}_{i,b}, \text{g}_b)$

            \If{$\text{isPlaceholder}$}
                \State $\text{init\_iso} \gets \text{isIsomorphic}(\text{s}_{i,a}, \text{s}_{i,b})$
                \State $\text{goal\_iso} \gets \text{isIsomorphic}(\text{g}^\star_a, \text{g}^\star_b)$
                \State \Return $\text{init\_iso} \wedge \text{goal\_iso}$
            \Else
                \State $\text{p}_a \gets \text{join(}\text{s}_{i,a}, \text{g}^\star_a\text{)}$
                \State $\text{p}_b \gets \text{join(}\text{s}_{i,b}, \text{g}^\star_b\text{)}$

                \State \Return $\text{isIsomorphic}(\text{p}_a, \text{p}_b)$
            \EndIf
        \EndFunction
    \end{algorithmic} 
\end{algorithm}

\paragraph{Transform to Scene Graphs.}
From each PDDL problem file, we generate two scene graphs: one for the initial state and another for the goal state (lines 2--3).
For each transformation, we first create an \texttt{object} node for each object.
Then, for each proposition in the collection, we create a new \texttt{proposition} node and create edges between the proposition node and the objects it takes as arguments.
Edge attributes denote argument order, predicate type, and whether the proposition is in an initial or goal scene.
See Section~\ref{sec:scene-graph} for details.

\paragraph{Check Easy Cases.}
For speed, we check several cases and return early if we can (lines 4--6).
First, if the number of \texttt{object} nodes in each graph is \emph{not} equal, the problems cannot be equivalent.
Second, if the initial scenes are \emph{not} isomorphic, then the problems cannot be equivalent.
Finally, if the problem graphs, composed of the initial and goal scenes, are isomorphic, then the problems are equivalent.

\begin{table*}[]
    \caption{A breakdown of the \benchmarkName\ dataset by the level of abstractness of the text description, number of propositions in the ground truth PDDL problem (size), and train/test split. Each instance consists of a ground truth PDDL problem description paired with a text description.}
    \label{tab:dataset-table}
    \centering
    \resizebox{1.0\textwidth}{!} {
        \begin{tabular}{lrrrrrrrrrrrr}
            \toprule
            \multirow{2}{*}{Domain} & \multirow{2}{*}{Total} & \multicolumn{4}{c}{Abstractness} & \multicolumn{5}{c}{Num. of Ground Truth Propositions (Size)} & \multicolumn{2}{c}{Split}\\
            \cmidrule(lr){3-6} \cmidrule(lr){7-11} \cmidrule(lr){12-13}
            &    & \multicolumn{2}{c}{Explicit to} & \multicolumn{2}{c}{Abstract to} & \multicolumn{5}{c}{} & \multicolumn{2}{c}{} \\
            &    & Explicit & Abstract & Explicit & Abstract & 1-20 & 21-40 & 41-60 & 61-80 & >80 & Train & Test\\ \midrule
            Blocks World & 85,605 & 22,595 & 20,152 & 20,152 & 22,706 & 963 & 10,011 & 47,153 & 25,766 & 1,712 & 80,612 & 4,993\\
            Gripper     & 54,141 & 13,683 & 10,565 & 12,466 & 17,427 & 2,334 & 13,640 & 16,405 & 20,325 & 1,437 & 49,363 & 4,778\\ 
            Floor Tile      &6,172 & 2,289 & 797 & 793 & 2,293 & 706 & 2,114 & 2,224 & 404 & 724 & 0 & 6,172\\
            \bottomrule
        \end{tabular}
    }
\end{table*}

\paragraph{Fully Specify the Goal Scenes.}
If the input is not an easy case, then we have to reason about the sets of goal states defined by the goal scenes (Figure~\ref{fig:fully-specify}).
Condition 3 of Definition~\ref{def:equivalence} requires that the sets of goal states that are consistent with the given goal propositions be equal after substituting matching propositions.
Since PDDL uses the open-world assumption for goals, we have to identify all propositions that are true in all reachable goal states when starting from the initial state for each problem.
The function \texttt{fullySpecify} finds all such propositions and adds them to the goal scenes as additional \texttt{proposition} nodes with the corresponding edges and attributes (lines 7--8).
We show how to implement \texttt{fullySpecify} efficiently for the three domains in \benchmarkName\ in Appendix~\ref{app:implementation}.

\paragraph{PDDL without Object Identity.}
Our algorithm runs in two modes, depending on the type of problem it is checking.
Sometimes we want to compare a generated PDDL problem file with many ground truth problem files and see if the generated file matches any one of them.
For example, if the natural language description says ``make a tower of height 3,'' the specific blocks to use are unspecified, and any permutation of blocks that builds a tower of height 3 should be considered correct.
Concretely, we want to treat the objects in the PDDL goal states as \emph{placeholders} and accept any permutation of them.
We check this condition when \texttt{isPlaceholder} is \textbf{True} (determined by the problem type) simply by checking isomorphism between initial and goal scenes separately rather than combining them into problem graphs (lines 10--12).

\paragraph{PDDL with Object Identity.}
If isPlaceholder is \textbf{False}, we want to check whether the problem files are precisely equivalent under Definition~\ref{def:equivalence}, meaning that the objects in the initial scenes correspond to the same objects in the goal scenes as well.
We first join the corresponding initial and scene graphs into problem graphs as described in Section~\ref{sec:scene-graph} (lines 14--15).
Then we check whether the two problem graphs are isomorphic (line 16).

We illustrate Algorithm~\ref{alg:planning-problem-equivalency} with examples in Appendix~\ref{app:examples}.
We also formally state its correctness, with the proof in Appendix~\ref{app:proof}.
\begin{theorem}
\label{thm:main}
\texttt{Equivalent}($P_a$, $P_b$, \textbf{False}) returns \textbf{True} if and only if the PDDL problem files $P_a$ and $P_b$ represent equivalent planning problems under Definition~\ref{def:equivalence}.
\texttt{Equivalent}($P_a$, $P_b$, \textbf{True}) returns \textbf{True} if and only if $P_a$ represents a planning problem that is equivalent to some planning problem represented by $P_b$ after a permutation of the objects in its goal state.
\end{theorem}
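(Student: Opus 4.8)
The plan is to reduce the claimed if-and-only-if statement to a chain of equivalences: (i) PDDL problem files $P_a, P_b$ represent equivalent planning problems under Definition~\ref{def:equivalence} iff there is a bijection between their object sets that carries one problem graph onto the other; (ii) such a bijection exists iff the problem graphs built by \texttt{join} are isomorphic as typed, attributed, directed graphs. Since the domain (hence the predicate set and the transition function $\transitionFunction$) is shared, Conditions~\ref{itm:property-1} and~\ref{itm:property-4} of Definition~\ref{def:equivalence} become automatic once a bijection on objects is fixed, so the content of equivalence collapses to Conditions~\ref{itm:property-2} and~\ref{itm:property-3}: the initial states must correspond and the \emph{sets of reachable goal states} must correspond. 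I would first establish a lemma stating exactly this reduction, making precise that $\isomorphicMapping$ on proposition symbols is induced by a bijection on objects (instantiating each shared predicate on the mapped arguments).

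Next I would prove the correctness of \texttt{fullySpecify}: the goal scene $\text{g}^\star$ it produces has, as its proposition nodes, exactly the set of propositions true in \emph{every} reachable goal state reachable from $\initialState$ under $\transitionFunction$ — i.e., $\bigcap \goalStates$ restricted to reachable states. This is the step I expect to be the main obstacle, because it is where the open-world assumption and the domain-specific reasoning enter: one must argue (a) soundness — every edge added is genuinely forced in all reachable goal states, and (b) completeness — no forced proposition is omitted. I would lean on the domain-specific implementations in Appendix~\ref{app:implementation} and argue that for each of Blocks World, Gripper, and (simplified) Floor Tile the closure rules used are exactly the entailment rules of that domain's invariants. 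Granting this, two goal scenes $\text{g}_a, \text{g}_b$ satisfy Condition~\ref{itm:property-3} (goal-state sets correspond under the object bijection) iff their fully specified versions $\text{g}^\star_a, \text{g}^\star_b$ are isomorphic, since a single goal state (the full specification) is a faithful representative of the whole goal-state set — a set-equality under relabeling holds iff the canonical representatives match.

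With those two lemmas in hand the theorem is bookkeeping. For \texttt{isPlaceholder} $=$ \textbf{False}: the \texttt{canDoFast}/\texttt{fastEquivalent} branch must be shown to be a sound-and-complete shortcut (object counts must match; the initial scenes must be isomorphic; and if the raw problem graphs are already isomorphic the problems are equivalent — here I note that adding forced edges to both sides cannot break an existing isomorphism, so skipping \texttt{fullySpecify} is safe in that case, and the early ``not equivalent'' returns are justified by the necessity direction of the reduction lemma). Otherwise we form $\text{p}_a = \texttt{join}(\text{s}_{i,a}, \text{g}^\star_a)$ and $\text{p}_b$ similarly; by the problem-graph reduction lemma, $\text{isIsomorphic}(\text{p}_a,\text{p}_b)$ holds iff a single object bijection simultaneously witnesses Conditions~\ref{itm:property-2} and~\ref{itm:property-3}, which by the reduction is exactly Definition~\ref{def:equivalence}. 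For \texttt{isPlaceholder} $=$ \textbf{True}: checking $\text{isIsomorphic}(\text{s}_{i,a},\text{s}_{i,b})$ and $\text{isIsomorphic}(\text{g}^\star_a,\text{g}^\star_b)$ \emph{separately} allows the object bijection on the goal scene to differ from the one on the initial scene, which is precisely ``equivalent after a permutation of the objects in the goal state''; I would show the conjunction of the two isomorphism checks is equivalent to the existence of such a (goal-permuted) equivalence, again via the reduction lemma applied to each scene independently. A final remark would address well-definedness details — that \texttt{toSceneGraph} loses no information relevant to Definition~\ref{def:equivalence}, and that the graphs are finite so the isomorphism checks terminate — completing the argument.
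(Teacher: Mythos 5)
Your proposal is correct and follows essentially the same route as the paper's own proof: a lemma transferring a type-preserving graph isomorphism to a bijection on proposition symbols (the paper's Lemma~\ref{lemm:bijection-transformation}), the observation that the shared domain makes Conditions~\ref{itm:property-1} and~\ref{itm:property-4} automatic, reliance on \texttt{fullySpecify} producing a canonical representative of the goal-state set, and separate scene-level isomorphism checks for the placeholder mode. The only notable difference is one of emphasis: you explicitly flag the soundness and completeness of the domain-specific \texttt{fullySpecify} closure rules as the main proof obligation, whereas the paper's proof takes this for granted by definition of ``trivial propositions'' and defers the per-domain justification to Appendix~\ref{app:implementation}.
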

 \subsection{The Dataset} \label{sub:dataset}
The \benchmarkName\ dataset includes 145,918 text-to-PDDL pairs derived from the Blocks World, Gripper, and Floor Tile domains ~\cite{SEIPP-2022-PDDL-GEN, MCDERMOTT-2000-IPC}.
Our dataset captures 25 unique initial and goal state configurations describing 73 different tasks.
In the Blocks World domain, a robotic hand manipulates blocks arranged in various configurations on a table, with 5 predicates and 4 actions.
The Gripper domain features a robot with grippers transporting balls between rooms, with 7 predicates and 3 actions.
While versions of Blocks World and Gripper domains exist with \texttt{:typing}, we choose to use the IPC standard versions, which do not.
We simplified the standard Floor Tile domain~\cite{SEIPP-2022-PDDL-GEN}, where robots are tasked with painting tiles, by removing the \texttt{left} and \texttt{down} predicates and relying on the \texttt{right} and \texttt{up} predicates to check relative positions.
We further relax the domain by removing the \texttt{clear} predicate (which introduces path constraints) and the \texttt{free-color} predicate (which IPC problems did not use and is not modified or required by any actions in the original domain).
Finally, we added \texttt{paint-left} and \texttt{paint-right} actions, which reduced edge cases for the domain, resulting in a domain with 6 predicates and 9 actions (compared to 10 predicates and 7 actions in the IPC version).
More details about the dataset can be found in Appendix~\ref{app:dataset-details}.

\begin{table*}[]    
    \caption{Examples of  abstract and concrete descriptions for each domain in the benchmark.}
    \centering
    \resizebox{1.0\textwidth}{!} {
        \begin{tabular}{@{}lll@{}}
        \toprule
        Domain &
          Abstract &
          Explicit \\ \midrule
        blocksworld &
          \begin{tabular}[c]{@{}l@{}}You have 3 blocks, stacked into 2 towers of heights\\  1, 2, and your arm is empty.  Your goal is to invert \\ each individual stack of blocks, such that the block \\ that in each tower that was originally on the bottom \\ will be on the top.\end{tabular} &
          \begin{tabular}[c]{@{}l@{}}Your arm is empty. b1 is clear. b1 is on the table.\\ b2 is clear. b2 is on b3. b3 is on the table. Your goal \\ is to have the following: Your arm should be empty. \\ b3 should be clear. b3 should be on b2. b2 should be \\ on the table. b1 should be clear. b1 should be on the \\ table.\end{tabular} \\
        gripper &
          \begin{tabular}[c]{@{}l@{}}You have 2 rooms, 2 balls, and 2 grippers. 1 balls are \\ distributed across the same number of grippers, and \\ the rest are in the first room. The robby is in the first \\ room. Your goal is to gather all balls into one room.\end{tabular} &
          \begin{tabular}[c]{@{}l@{}}You have 2 rooms, 2 balls, and 2 grippers. 1 balls \\ are distributed across the same number of grippers, \\ and the rest are in the first room. The robby is in the \\ first room. Your goal is to have the following:  gripper1 \\ should be free. gripper2 should be free. ball1 should be \\ at room1. ball2 should be at room1.\end{tabular} \\
        floor-tile &
          \begin{tabular}[c]{@{}l@{}}You have 1 robots, 2 colors, and 2 unpainted tiles \\ arranged in a grid with 1 rows and 2 columns. The \\ first robot is at the top-left corner, and has the first \\ color. All colors are available. Your goal is to paint \\ all the tiles with the same color.\end{tabular} &
          \begin{tabular}[c]{@{}l@{}}You have 1 robot. You have 2 tiles. You have 2 colors.\\ Tile tile2 is to the right of tile tile1. The robot robot1\\ has color color1. The robot robot1 is at tile tile1. Color color1\\ is available. Color color2 is available. Your goal is to have\\ the following: Tile tile1 should be painted with color color1.\\ Tile tile2 should be painted with color color1.\end{tabular} \\ \bottomrule
        \end{tabular}
    }
\end{table*} 
\paragraph{Dataset Construction.}
We focus on these three domains commonly used in other works~\cite{VALMEEKAM-2023-PLANBENCH, LIU-2023-LLM+P} but are nevertheless challenging. 
Each entry in our dataset is a text-to-PDDL pair consisting of a text description detailing the initial and goal states and the corresponding ground truth PDDL.
(See Appendix~\ref{app:text-to-PDDL-pair} for examples.)
The composition of the \benchmarkName{} dataset is presented in Table~\ref{tab:dataset-table}, which categorizes the data according to text description abstractness and the number of propositions.
The table further delineates the train/test split, illustrating the distribution of data reserved for evaluation purposes. For a comprehensive overview of the task descriptions and the specific allocation of tasks between training and test sets, readers are directed to Appendix~\ref{app:dataset-details}. The dataset is procedurally generated: we handcraft templates for each task configuration, which we combine with one another to generate problems at scale.

We vary the data along two dimensions: abstractness (explicit vs.\ abstract) and size.
Explicit planning problem text descriptions correspond directly to propositions found in the problem PDDL (e.g., ``block 1 is on block 2''). Abstract text descriptions instead summarize a state (e.g., ``all blocks are in a single tower'').

Since our text descriptions contain both initial and goal states, each can be either an abstract or explicit description.
This leads to four possible abstractness categories: explicit to explicit, explicit to abstract, abstract to explicit, and abstract to abstract.

We measure the size of a problem by the number of propositions listed in the ground truth problem PDDL.
Larger problems typically pose greater challenges for LLMs. \begin{figure*}[h!]
    \begin{centering}
        \includegraphics[width=\linewidth]{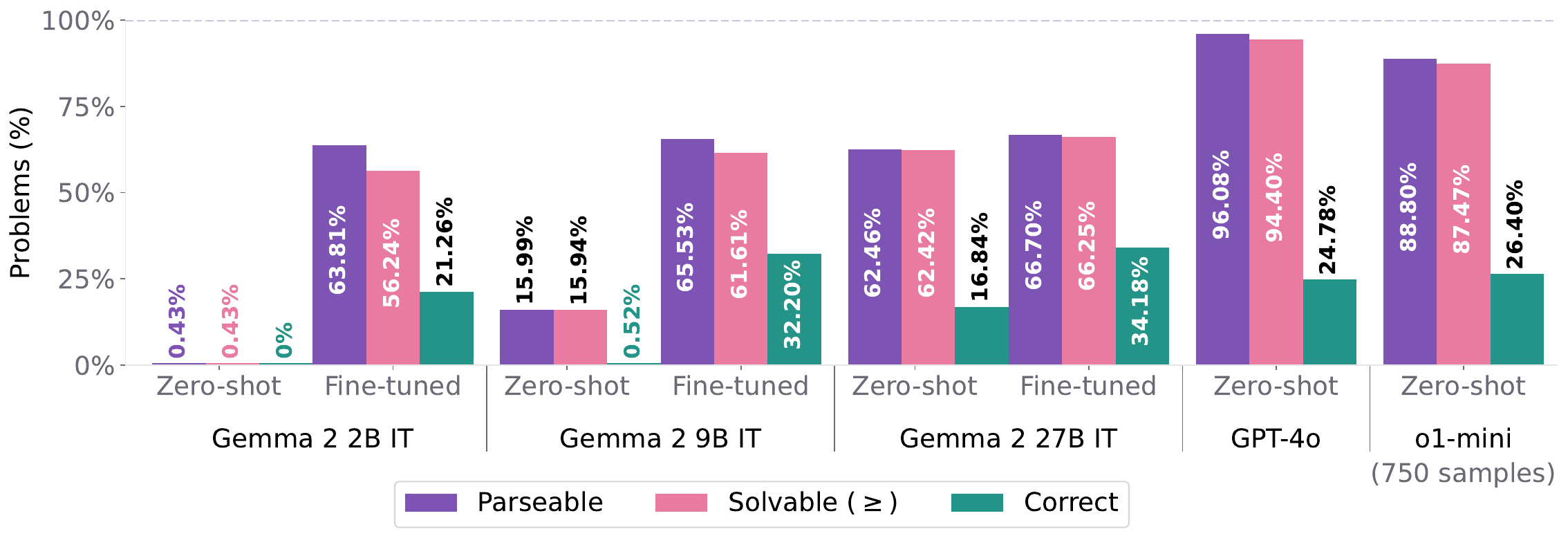}
        \caption{Performance of various models on the \benchmarkName{} test set.}
        \label{fig:results}
    \end{centering}
\end{figure*}

\section{Evaluating LLMs on \benchmarkName} \label{sec:eval}

As an initial snapshot of the field's current state, we evaluate several API-access and open-weight language models on \benchmarkName\ in both zero-shot and fine-tuned settings.
We find that while powerful models like GPT-4o can often generate valid PDDL problems (in the sense that some plan solves them), they are rarely correct.
This result underscores the need for \benchmarkName's rigorous approach to evaluation of PDDL generation.
The code to recreate the entire evaluation is available at
\href{https://github.com/BatsResearch/planetarium}{\texttt{github.com/BatsResearch/planetarium}}.

\paragraph{Models.}
We evaluate two API-access models, GPT-4o and o1-mini, and we evaluate three open-weight models before and after fine-tuning: Gemma 2 2B IT, Gemma 2 9B IT, and Gemma 2 27B IT~\cite{team2024gemma}. Details regarding fine-tuning can be found in Appendix~\ref{app:experiments}.

\paragraph{Evaluation Protocol.}
We evaluate models on the \benchmarkName{} test set, which consists of heldout configurations from the Blocks World and Gripper domains and the entirety of the Floor Tile domain.
Models are prompted with the natural language description of the task along with the respective domain PDDL.
We record three metrics for the generated problems: the number of parseable problems, the number of solvable problems, and the number of correct problems.
We say a model output is \emph{parseable} if a PDDL parser supporting \texttt{:strips} can extract a valid PDDL problem from a substring in the output and if it can be converted into our graph representation.
A problem is \emph{solvable} if it is parseable and a plan can be applied to the initial scene that results in the goal scene.
Due to the size and complexity of some of the problems in our dataset, a generalized classical planner cannot always reliably and quickly return solutions.
Instead, we built specialized planners for Blocks World and Gripper problems that work on all problems in our dataset and generally all validly defined \texttt{blocksworld} and \texttt{gripper} domain problems except for a few invalid edge cases (e.g., one block on top of two blocks at a time, holding two blocks, etc.). We use Fast Downward~\cite{HELMERT-2006-FAST-DOWNWARD} for \texttt{floor-tile} problems.
We then validate all plans with VAL~\cite{HOWEY-2004-VAL}.
We then feed parseable and solvable problems into our PDDL equivalence algorithm to verify equivalence to our ground truth PDDL and thereby determine if it is \emph{correct}.
We found our equivalence algorithm to take, on average, 12ms per example to compute on an M2 Apple Silicon laptop with batch parallelization.

\begin{figure}[t!]
    \begin{centering}
        \includegraphics[width=0.96\linewidth]{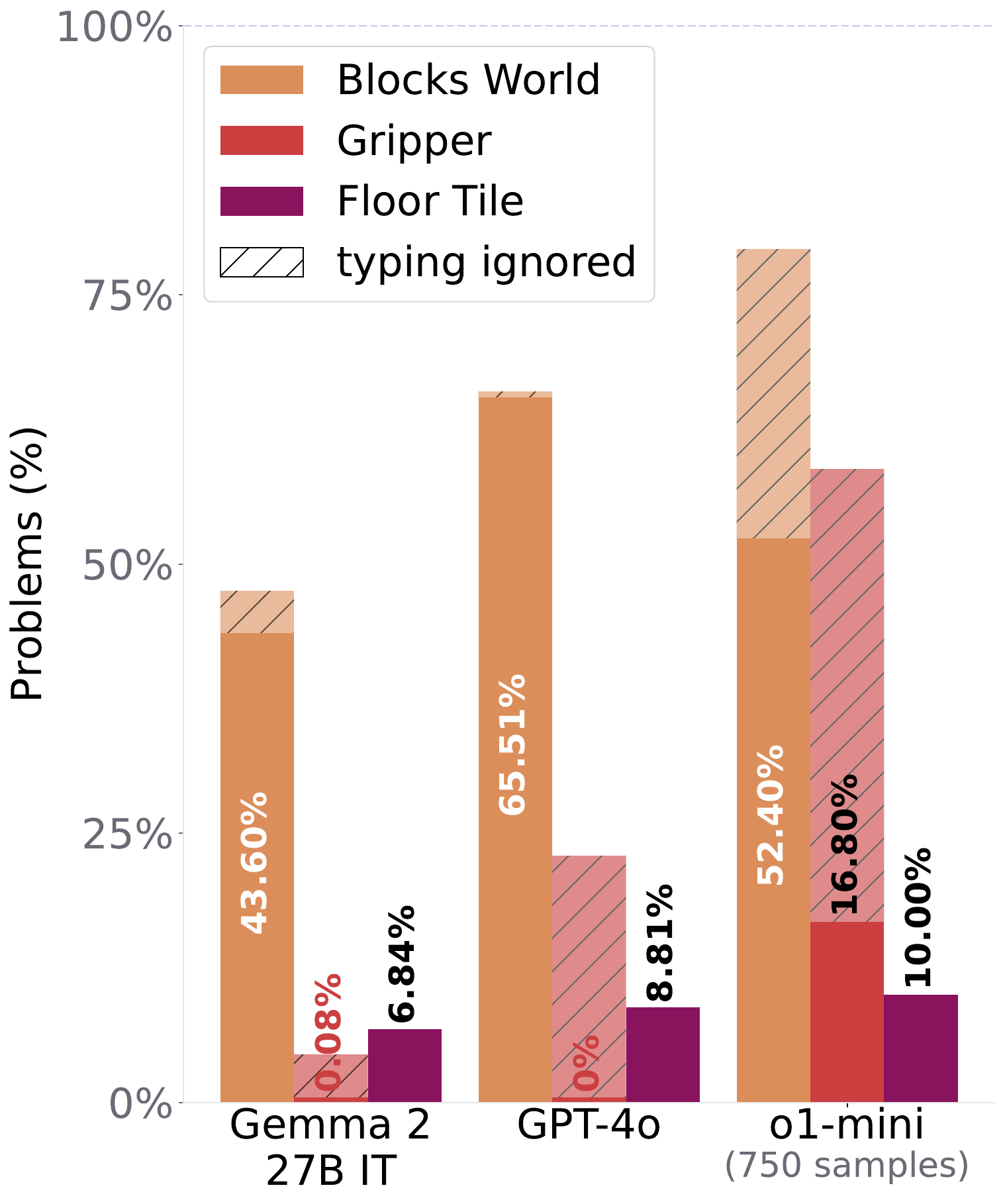}
        \caption{Breakdown of zero-shot performance by domain for Gemma 2 27B IT, GPT-4o, and o1-mini.}
        \label{fig:domain-breakdown}
    \end{centering}
\end{figure}

\paragraph{Results.} The performance of Gemma 2 2/9/27B IT, GPT-4o, and o1-mini on the \benchmarkName{} test set are shown in Figure~\ref{fig:results}.
We report the percentage of generated plans that were parseable, solvable, and correct from zero-shot and fine-tuned settings, averaged across all domains.
For o1-mini, we randomly sample 250 problems per domain to evaluate due to prohibitive costs.
In the zero-shot setting, o1-mini and GPT-4o performed the best, followed closely by Gemma 27B IT.
For all models, the percentage of generated PDDL that is semantically correct significantly lags behind the percentage of generated PDDL that is parseable and solvable.

Fine-tuning improved performance across all open-weight models, with fine-tuned Gemma 2 27B IT achieving the highest accuracy of all tested models at 34.2\%.

\paragraph{Takeaways}
We find two key takeaways from the type of mistakes these language models tended to make.
First, a significant portion of errors come from models ``ignoring'' the domain in context and referring to an incorrect domain, likely from its prior training corpus.
We can see this by the large increase in accuracy when ignoring \texttt{:typing}, shown in Figure~\ref{fig:domain-breakdown}: our Blocks World and Gripper domains do not have typing, yet when generating PDDL, these models confidently and consistently add object types.

Second, there seems to be a strong correlation between a model's performance on a certain domain and the domain's frequency in the wild: Gripper and Blocks World are often used as introductory PDDL examples, while Floor Tile is less common.
However, Gripper, and to a lesser extent, Blocks World, is often defined with \texttt{:typing}. This may explain the difference in performance between Blocks World and Gripper across models, as well as the difference in performance between enforcing versus not enforcing \texttt{:typing}.

Succinctly, we see a trend when relaxing the \texttt{:typing} requirement that models find problems in less common domains harder, accounting for everything else.
 \section{Conclusion} \label{sec:conclusion}

\benchmarkName\ is a new benchmark for assessing the ability of LLMs to translate natural language descriptions of planning problems into PDDL.
Our work reveals that language models find it challenging to generate semantically correct structured planning language descriptions. Models like GPT-4o can often produce valid, seemingly correct descriptions (94.4\%) when, in reality, only a small fraction (24.8\%) are genuinely correct.
We hope that \benchmarkName\ will drive progress on hybrid approaches combining LLMs and classic planners, setting a standard for evaluating such tasks. 

\section*{Acknowledgements}
This research is supported in part by the Office of Naval Research (ONR) award N00014-20-1-2115.
We gratefully acknowledge support from Cisco, Cognex, and the Brown Computer Science Faculty Innovators Fund.
Disclosure: Stephen Bach is an advisor to Snorkel AI, a company that provides software and services for data-centric
artificial intelligence.
 \section*{Limitations}

\benchmarkName\ has a few important limitations: \benchmarkName\ currently only supports the Blocks World, Gripper, and Floor Tile domains. While these domains have been popular for studying LLMs and their relation to planning, incorporating more expressive domains in the future will widen the scope of \benchmarkName.
This benchmark is also currently restricted to the STRIPS subset of PDDL. Extending it to support more expressive subsets of PDDL will allow us to evaluate more complex, real-world planning problems such as non-deterministic, temporal, and numeric domains. \section*{Potential Impacts}

A potential societal impact of this research is ensuring the correctness of translating natural language into structured planning languages. 
If this translation method becomes widespread but its evaluation remains inaccurate, systems could produce misleading or misaligned results that could cause harm if acted upon.
\benchmarkName\ highlights the importance of assessing the correctness of translations. We achieve this by reasoning about PDDL semantics and the inherent structure of classical planning problems.

\bibliography{custom}

\clearpage
\appendix

\appendix

\section{Proof of Theorem 1}
\label{app:proof}
\begin{proposition}
\label{prop:is-equivalent-no-placeholder}
    \texttt{Equivalent}($P_a$, $P_b$, \textbf{False}) returns \textbf{True} if and only if the PDDL problem files $P_a$ and $P_b$ represent equivalent planning problems under Definition~\ref{def:equivalence}.
\end{proposition}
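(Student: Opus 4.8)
The plan is to show both directions by unwinding the definitions that connect PDDL problem files, scene graphs, problem graphs, and the set-theoretic planning problems of Definition~\ref{def:classical-planning}. The central observation is that, because the PDDL domain is shared, the set of proposition symbols $\propositionSymbolSpace$ of each problem is determined by instantiating the shared predicates over the problem's objects, and the actions, transition function, and state space are all functions of the domain plus the object set. Hence a bijection $\isomorphicMapping : \propositionSymbolSpace[1] \to \propositionSymbolSpace[2]$ satisfying the four conditions of Definition~\ref{def:equivalence} exists if and only if it is \emph{induced} by a bijection $\pi$ on the objects: conditions~\ref{itm:property-1} and~\ref{itm:property-4} (on $\stateSpace$ and $\actionSpace$) are automatically satisfied by any object bijection since they depend only on the shared domain, so the content of equivalence reduces to conditions~\ref{itm:property-2} and~\ref{itm:property-3} — that $\pi$ carries $\initialState[1]$ to $\initialState[2]$ and the goal-state set of $P_a$ to that of $P_b$. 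I would state this reduction as the first lemma of the proof.

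Next I would relate the goal-state condition to the \texttt{fullySpecify} operation. Condition~\ref{itm:property-3} asks for equality of the \emph{sets of reachable goal states}, but a PDDL goal specification is an open-world conjunction that typically under-determines these states. I would argue that $\texttt{fullySpecify}(\text{s}_i, \text{g})$ computes exactly the set of propositions true in every reachable goal state, so that the fully specified goal scene $\text{g}^\star$ is a canonical representative: two problems have $\pi$-matching reachable-goal-state sets iff $\pi$ induces an isomorphism $\text{g}^\star_a \cong \text{g}^\star_b$ of goal scene graphs (this correctness of \texttt{fullySpecify} is the per-domain content deferred to Appendix~\ref{app:implementation}, which I would cite rather than reprove). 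Combining this with the initial-state condition, equivalence under Definition~\ref{def:equivalence} holds iff there is an object bijection $\pi$ that is simultaneously an isomorphism $\text{s}_{i,a} \cong \text{s}_{i,b}$ on initial scenes and $\text{g}^\star_a \cong \text{g}^\star_b$ on fully specified goal scenes. Then I would observe that the \texttt{join} operation merges the two scenes on their shared object nodes, so a \emph{single} object bijection simultaneously witnessing both isomorphisms is precisely an isomorphism of the joined problem graphs $\text{p}_a \cong \text{p}_b$; this is where requiring \textbf{False} (object identity) matters, since it forces the same $\pi$ on both scenes rather than two independent ones. The type- and attribute-preserving constraints on scene-graph isomorphism (argument order, predicate name, init-vs-goal tag) guarantee the graph bijection respects predicates and argument positions, i.e., genuinely comes from an object bijection.

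Finally I would trace the control flow of \texttt{equivalent}($P_a,P_b,\textbf{False}$): the early-exit branch returns \textbf{True} exactly when \texttt{canDoFast} succeeds and \texttt{fastEquivalent} holds, so I need the side condition that \texttt{fastEquivalent} agrees with the full check on inputs where \texttt{canDoFast} is true (a lemma I would state and attribute to the easy-case analysis — unequal object counts or non-isomorphic initial scenes force inequivalence, and isomorphic full problem graphs force equivalence); otherwise the function falls through to lines 13--16, computes $\text{g}^\star_a,\text{g}^\star_b$, joins, and returns $\text{isIsomorphic}(\text{p}_a,\text{p}_b)$, which by the previous paragraph is equivalent to Definition~\ref{def:equivalence}. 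The main obstacle I anticipate is the correctness of \texttt{fullySpecify} — that the added edges are exactly the propositions forced true in all reachable goal states, no more and no less — because soundness (added propositions really are forced) and completeness (nothing forced is omitted) each require reasoning about reachability in the specific domains; I would isolate this as a cited lemma so the rest of the argument is a clean chain of definitional equivalences. A secondary subtlety is making sure the open-world reading is handled symmetrically for $P_a$ and $P_b$ and that \texttt{join}'s identification of object nodes is well-defined (objects are identified by name across the two scenes of the \emph{same} problem, not across problems), which I would note explicitly when defining the problem graph.
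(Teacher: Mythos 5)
Your proposal is correct and follows essentially the same route as the paper's proof: both reduce equivalence under Definition~\ref{def:equivalence} to isomorphism of the joined problem graphs via the correspondence between graph-node bijections and proposition bijections (the paper's Lemma~\ref{lemm:bijection-transformation}), both defer the correctness of \texttt{fullySpecify} to the domain-specific analysis in Appendix~\ref{app:implementation}, and both treat the \texttt{fastEquivalent} early exit as a separate case. Your organization as a chain of biconditionals --- with the explicit upfront observation that the shared domain makes conditions~\ref{itm:property-1} and~\ref{itm:property-4} automatic, so equivalence reduces to an object bijection satisfying conditions~\ref{itm:property-2} and~\ref{itm:property-3} --- is somewhat tighter than the paper's two-directional, property-by-property verification, but the underlying argument is the same.
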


\begin{proof}
    To prove this proposition, we need to prove each direction of the biconditional.
    First, we will prove the forward implication.
    There are two possible ways the algorithm can return \texttt{True}: Either in the \texttt{isIsomorphic} check (line 16) or in \texttt{fastEquivalent} (line 5).

    \noindent \textbf{Case 1:}
        When \texttt{isIsomorphic}($\text{p}_a$, $\text{p}_b$) returns \texttt{True}, we know that there exists a bijection $\varphi$ from vertices $V_{p_{a}}$ to vertices $V_{p_{b}}$. Since $\varphi$ is a type-preserving bijection, it means the two graphs share a connectivity structure where all types and attributes match.
    
        By using Lemma~\ref{lemm:bijection-transformation}, we can build a new bijection $f$ from $\varphi$. 
        Instead of mapping vertices in the scene graph, $f$ maps propositions from $\propositionSymbolSpace[a]$ in planning problem $\planningProblem[a]$ to propositions in $\propositionSymbolSpace[b]$ in planning problem $\planningProblem[b]$. 
        Given that $\varphi$ is a type-preserving bijection, we know that the bijection $f$ derived from $\varphi$ also maps objects to objects, predicates to predicates, vertices in the initial scene $s_{i, a}$ to vertices in $s_{i, b}$, and vertices in the goal scene $g_{a}^{*}$ to vertices in $g_{b}^{*}$.
    
        To prove Property~\ref{itm:property-1}, we can build the set $\propositionSymbolSpace[a]$ for each planning problem by taking each predicate in the PDDL domain and instantiating it with all possible combinations of objects from the problem PDDL $\planningProblem[a]$.
        Then, we can construct the set of all possible states of the problem by taking the power set $\stateSpace[a] = 2^{\propositionSymbolSpace[a]}$. 
        For each proposition $\propositionSymbol \in \state$ in each state $\state \in \stateSpace[a]$, we apply the bijection $f$ to show that we get $\stateSpace[b]$:

        \begin{flalign*}
            &\stateSpace = \{ \{\isomorphicMapping(\propositionSymbol) : \propositionSymbol \in \state  \} : \state \in \stateSpace[a] \} \\
            &= \{ \{\langle \varphi(q^a), \varphi(o_1^a), \ldots, \varphi(o_k^a) \rangle : \propositionSymbol \in \state  \} : \state \in \stateSpace[a] \} \\
            &= \{ \{\langle q^b, o_1^b, \ldots, o_k^b \rangle : \propositionSymbol \in \state  \} : \state \in \stateSpace[a] \} \\
            &= \{ \{\propositionSymbol[b] : \propositionSymbol \in \state  \} : \state \in \stateSpace[a] \} \\
            &= \stateSpace[b].
        \end{flalign*}
    
        To prove Property~\ref{itm:property-2}, we go through each proposition $\propositionSymbol$ in the initial state $\initialState[a]$ of the problem PDDL $\planningProblem[a]$, and apply the bijection $f$ to show that we get $\initialState[b]$:
        
        \begin{flalign*}
            &\initialState = \{ \isomorphicMapping(\propositionSymbol[a]) : \propositionSymbol[a] \in \initialState[a] \} \\
            &= \{ \langle \varphi(q^a), \varphi(o_1^a), \ldots, \varphi(o_k^a) \rangle: \propositionSymbol[a] \in \initialState[a] \} \\
            &= \{ \langle q^b, o_1^b, \ldots, o_k^b \rangle: \propositionSymbol[a] \in \initialState[a] \} \\
            &= \{ \propositionSymbol[b]: \propositionSymbol[a] \in \initialState[a] \} \\
            &= \initialState[b].
        \end{flalign*}
    
        To prove Property~\ref{itm:property-3}, we first observe that the \texttt{fullySpecify} function adds all possible trivial propositions to the goal scene graph. By definition, trivial propositions are those that are known not to change the underlying planning problem when added or removed. We can use the same bijection $f$ constructed from $\varphi$. We don't need to add or remove non-trivial predicates, as doing so would change the planning problem by definition. This implies that the augmented goal state represents any other possible goal state in $\goalStates$. With this, we can use the bijection $f$ built from $\varphi$ to apply it to any state in $\goalStates[a]$ and show that we get $\goalStates[b]$:
    
        \begin{flalign*}
            &\goalStates = \{ \{\isomorphicMapping(\propositionSymbol) : \propositionSymbol \in \state  \} : \state \in \goalStates[a] \} \\
            &= \{ \{\langle \varphi(q^a), \varphi(o_1^a), \ldots, \varphi(o_k^a) \rangle \hspace{-.25em}:\hspace{-.25em} \propositionSymbol \in \state  \} \hspace{-.25em}:\hspace{-.25em} \state \in \goalStates[a] \} \\
            &= \{ \langle q^b, o_1^b, \ldots o_k^b \rangle : \propositionSymbol \in \state  \} : \state \in \goalStates[a] \} \\
            &= \{ \{ \propositionSymbol[b] : \propositionSymbol \in \state  \} : \state \in \goalStates[a] \} \\
            &= \goalStates[b].
        \end{flalign*}

        Finally, to prove Property~\ref{itm:property-4}, we observe that \texttt{Equivalent}($P_a$, $P_b$, \textbf{False}) assumes that the domain files are shared between $P_a$ and $P_b$. As a result, any bijection $\varphi$ found by the algorithm must preserve the equality of actions across both problems, since the shared domain enforces consistency in their action sets.

        \noindent \textbf{Case 2:}
        We can use the same arguments from Case 1 with one slight modification. 
        Here, \texttt{fastEquivalent} returns \texttt{True} only if \texttt{isIsomorphic}($\texttt{join(}s_{i,a}, g_a\texttt{)}$, $\texttt{join(}s_{i,b}, g_{b}\texttt{)}$) returns \texttt{True}. 
        The difference with Case 1 is that we join the initial states with the goal propositions scene graph $g$ instead of the fully specified scene graph $g^\star$.
         
        In Case 1 we have proven that when \texttt{isIsomorphic}($p_a$, $p_b$) returns \texttt{True}, the planning problems $\planningProblem[a]$ and $\planningProblem[b]$ are equivalent under the bijection $f$. 
        Since $p_a = \texttt{join(}s_{i,a}, g_{a}^\star\texttt{)}$ and $p_b = \texttt{join(}s_{i,b}, g_{b}^\star\texttt{)}$, we can let $g_{a}^\star = g_{a}$ and $g_{b}^\star = g_{b}$. 
        This is because, by definition, trivial propositions do not change the underlying planning problem when added or removed, and in this case, we are only removing trivial propositions. Hence, by the proof of Case 1 we conclude that this case has also to be true.

    Now, we will prove the backward implication. In this case, we will assume that there exists a bijection $f$ that follows properties~\ref{itm:property-1},~\ref{itm:property-2}, and~\ref{itm:property-3} of Definition~\ref{def:equivalence}.
    By Lemma~\ref{lemm:bijection-transformation}, we know that $f$ comes from $\varphi$, and since both are bijections, we can recover $\varphi$ by doing the inverse process.
    We need to prove that these conditions lead to \texttt{isIsomorphic} (line 16) or \texttt{fastEquivalent} (line 5) returning \texttt{True}.

    Since there exists a bijection $\varphi$ that comes from $f$, the number of objects must be the same for planning problems $\planningProblem[a]$ and $\planningProblem[b]$, as stated in Property~\ref{itm:property-1}.
    With Property~\ref{itm:property-2}, we know that this bijection $\varphi$ must also make the initial scenes isomorphic.
    Assume that the sets of goal propositions have the same size. 
    By using Property~\ref{itm:property-3}, we know that the bijection $\varphi$ would make \texttt{fastEquivalent} (line 5) return \texttt{True}.

    Conversely, assume that the sets of goal propositions have different sizes. 
    In this case, \texttt{canDoFast} would not allow \texttt{fastEquivalent} to execute.
    In the case of \texttt{isIsomorphic} (line 16), we know that $p_a = \texttt{join(} s_{i,a} g_{a}^\star\texttt{)}$ and $p_b = \texttt{join(} s_{i,b}, g_{b}^\star\texttt{)}$.
    Since we know that we can construct the bijection $\varphi$ that maps vertices from the bijection $f$, we also know that since $\varphi$ comes from $f$, this bijection respects initial states (Property~\ref{itm:property-2}) and goal states (Property~\ref{itm:property-3}). 
    Furthermore, the algorithm does not modify any actions in the domain file, and since a single domain is shared across all problems, the actions remain consistent between them (Property~\ref{itm:property-4}). 
    Therefore, \texttt{isIsomorphic}($p_a$, $p_b$) must return \texttt{True}.

    With this, we have proven that if two problems are equivalent under Definition 2, then the equivalence algorithm must return \texttt{True}.

    Since we have proven both directions of the biconditional, Proposition~\ref{prop:is-equivalent-no-placeholder} is true.
\end{proof}

\begin{proposition}
\label{prop:is-equivalent-placeholder}
    \texttt{Equivalent}($P_a$, $P_b$, \textbf{True}) returns \textbf{True} if and only if $P_a$ represents a planning problem that is equivalent (under Definition~\ref{def:equivalence}) to some planning problem represented by $P_b$ after a permutation of the objects in its goal state.
\end{proposition}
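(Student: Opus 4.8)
The plan is to follow the proof of Proposition~\ref{prop:is-equivalent-no-placeholder} and reduce to it as far as possible, isolating the one new ingredient: in the placeholder branch the object bijection witnessing isomorphism of the initial scenes may differ from the one witnessing isomorphism of the goal scenes, and the required ``permutation of the objects in the goal state'' is exactly the discrepancy between these two bijections.

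\emph{Forward direction.} Suppose \texttt{Equivalent}($P_a$, $P_b$, \textbf{True}) returns \textbf{True}. As in Proposition~\ref{prop:is-equivalent-no-placeholder} this happens in one of two ways. If the answer comes from \texttt{fastEquivalent} (line~5), the argument of Case~2 there shows that $P_a$ and $P_b$ are already equivalent under Definition~\ref{def:equivalence}, so the identity permutation of the goal objects witnesses the claim. Otherwise lines~10--12 are executed, giving type-preserving vertex bijections $\varphi_{\mathrm{init}}$ (of $s_{i,a}$ onto $s_{i,b}$) and $\varphi_{\mathrm{goal}}$ (of $g^\star_a$ onto $g^\star_b$), which need not be related. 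By Lemma~\ref{lemm:bijection-transformation} they induce object bijections $\sigma_{\mathrm{init}},\sigma_{\mathrm{goal}}\colon O_a\to O_b$ and corresponding proposition bijections. Set $\pi := \sigma_{\mathrm{init}}\circ\sigma_{\mathrm{goal}}^{-1}$, a permutation of $O_b$; let $P_b'$ be $P_b$ with the objects in its goal propositions relabelled by $\pi$ (same domain, same objects, same initial state); and claim the proposition bijection induced by $\sigma_{\mathrm{init}}$ witnesses $P_a\equiv P_b'$. Properties~\ref{itm:property-1},~\ref{itm:property-2},~\ref{itm:property-4} go through exactly as in Proposition~\ref{prop:is-equivalent-no-placeholder}, using that $\sigma_{\mathrm{init}}$ is an isomorphism of the initial scenes and that the shared, untouched domain fixes the action set; Property~\ref{itm:property-3} follows by combining the $\sigma_{\mathrm{goal}}$-isomorphism of the fully specified goal scenes with the definition of $\pi$, which is precisely what converts ``$g^\star_a$ maps to $g^\star_b$ under $\sigma_{\mathrm{goal}}$'' into ``$g^\star_a$ maps to the fully specified goal scene of $P_b'$ under $\sigma_{\mathrm{init}}$.''

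\emph{Backward direction.} Conversely, let $\pi$ be a permutation of the objects of $P_b$ such that $P_a$ is equivalent under Definition~\ref{def:equivalence} to the problem $P_b'$ obtained by relabelling $P_b$'s goal objects by $\pi$, witnessed by a proposition bijection $f$ arising (Lemma~\ref{lemm:bijection-transformation}) from an object bijection $\sigma\colon O_a\to O_b$. Property~\ref{itm:property-2} makes $\sigma$ a type-preserving isomorphism of $s_{i,a}$ onto $s_{i,b}$, so $|O_a|=|O_b|$ and \texttt{isIsomorphic}($s_{i,a}$, $s_{i,b}$) returns \textbf{True}; Property~\ref{itm:property-3}, together with the fact that $P_b'$ shares $P_b$'s initial state and differs only by $\pi$ on the goal, makes $\pi^{-1}\circ\sigma$ a type-preserving isomorphism of $g^\star_a$ onto $g^\star_b$, so \texttt{isIsomorphic}($g^\star_a$, $g^\star_b$) returns \textbf{True}. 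For control flow: if \texttt{canDoFast} fires, then either a size or initial-scene mismatch is detected --- impossible here --- or the un-fully-specified problem graphs are isomorphic and \texttt{fastEquivalent} returns \textbf{True}; if it does not fire, execution reaches lines~10--12, which return the conjunction of the two checks just shown to hold. Either way the algorithm returns \textbf{True}.

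\emph{Main obstacle.} The delicate point is Property~\ref{itm:property-3}: \texttt{fullySpecify} is computed from the initial scene \emph{and} the goal scene, whereas $\pi$ relabels only the goal. I would support this step with a lemma --- proved directly or extracted from the \texttt{fullySpecify} correctness arguments in Appendix~\ref{app:implementation} --- stating that \texttt{fullySpecify} faithfully realizes Definition~\ref{def:equivalence}'s condition on goal states (isomorphism of fully specified goal scenes holds iff the sets of goal states agree under the induced bijection) and is equivariant under any object bijection applied consistently. Given that lemma, what remains is routine composition bookkeeping for $\sigma_{\mathrm{init}}$, $\sigma_{\mathrm{goal}}$, $\pi$. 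One should also note in passing that the \texttt{canDoFast}/\texttt{fastEquivalent} shortcut never produces a wrong answer in placeholder mode: its only \textbf{True}-producing branch certifies full isomorphism of the problem graphs, hence full (a fortiori placeholder) equivalence, while the branches it uses to answer \textbf{False} correspond to size or initial-scene mismatches that preclude placeholder equivalence outright.
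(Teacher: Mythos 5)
Your proof is correct and follows essentially the same route as the paper's: dispatch the \texttt{fastEquivalent} branch by reduction to Proposition~\ref{prop:is-equivalent-no-placeholder}, and in the placeholder branch convert the two separate scene-graph isomorphisms into proposition bijections via Lemma~\ref{lemm:bijection-transformation} and combine them into the required goal-object permutation. The only differences are in precision, and they favor your version: your $\pi := \sigma_{\mathrm{init}}\circ\sigma_{\mathrm{goal}}^{-1}$ is a well-typed formalization of what the paper vaguely describes as ``permuting the mappings of $f$ until they fit $h$'' and composing them, and the \texttt{fullySpecify}-equivariance point you flag as needing a supporting lemma is a genuine subtlety that the paper's proof passes over without comment.
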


\begin{proof}
    To prove this proposition, we need to prove each direction of the biconditional. First, we will prove the forward implication. There are two possible ways the algorithm can return \texttt{True}: Either in \texttt{fastEquivalent} (line 5) or in \texttt{isIsomorphic}($s_{i,a}$, $s_{i,b}$) $\wedge$ \texttt{isIsomorphic}($g^\star_a$, $g^\star_b$) (line 12).

    To prove the case where \texttt{fastEquivalent} (line 5) returns \texttt{True}, we can use the same argument as in Case 1 of the forward implication in the proof of Proposition~\ref{prop:is-equivalent-no-placeholder}. 
    This is because the placeholder being \texttt{True} does not affect the \texttt{fastEquivalent} function.

    Now, assume that \texttt{isIsomorphic}($s_{i,a}$, $s_{i,b}$) $\wedge$ \texttt{isIsomorphic}($g^\star_a$, $g^\star_b$) is \texttt{True}. 
    This implies that there exist two bijections: one $\varphi$ on the vertices $V_{s_{i,a}}$ to $V_{s_{i,b}}$, and another $\psi$ between vertices $V_{g^\star_a}$ to $V_{g^\star_b}$.
    Now, by using Lemma 1, we can build two new bijections, $f$ and $h$, that map between propositions in $L$ instead of nodes in the scene graph, as $\varphi$ and $\psi$ do.

    Since the underlying mappings of $f$ and $h$ can be different, we need a way to combine them into a single bijection. One way to construct this bijection is to permute the mappings of $f$ until they fit the mappings of $h$. This implies that there are many pairs of $f$ and $h$, and each pair can be combined into a global bijection using function composition $f \circ h$.  For each of these permutations, we can then use the arguments in Case 2 of the proof of Proposition~\ref{prop:is-equivalent-no-placeholder} to show that these pairs hold all three properties of Definition~\ref{def:equivalence}.

    Now, we will prove the backward implication. First, since \texttt{fastEquivalent} remains the same, we do not need to prove anything new about it; we can simply reuse the previous results.

    Assume $P_a$ represents a planning problem that is equivalent to some planning problem represented by $P_b$ after a permutation of the objects in its goal state. 
    This implies that there are several bijections $f$ from the predicates in $P_a$ to the predicates in $P_b$. For each of these bijections $f$, we can use Lemma~\ref{lemm:bijection-transformation} to transform it into $\varphi$. 
    Since we know by Property\ref{itm:property-2} that $\varphi$ maps initial states, there is an isomorphism between the initial states. 
    By Property~\ref{itm:property-3}, we know that $\varphi$ maps any goal state of $P_a$ to any other possible state in $P_b$. 
    Hence, $s^\star_a$ has to be isomorphic with $s^\star_b$ since $\varphi$ is a bijection between their nodes.

    Since we have proven both directions of Proposition~\ref{prop:is-equivalent-placeholder}, we know it is true.
\end{proof}

\begin{proof}[Proof of Theorem \ref{thm:main}]
    Having the proof of Proposition~\ref{prop:is-equivalent-no-placeholder} and the proof of Proposition~\ref{prop:is-equivalent-placeholder}, we know that the theorem, which is just the conjunction of these propositions, must be true.
\end{proof}

\begin{lemma}
\label{lemm:bijection-transformation}
    Let $\varphi$ be an isomorphic bijection between nodes in two scene graphs (propositions or objects). 
    There exists a bijection $f$ over the propositions in $L$ such that for any proposition $\propositionSymbol \in \propositionSymbolSpace$, $f(\propositionSymbol) = \langle \varphi(q), \varphi(o_1), \varphi(o_2), \ldots, \varphi(o_k) \rangle$, where $q$ is the predicate and $O = \{ o_1, o_2, \ldots, o_k \}$ is a set of objects.
\end{lemma}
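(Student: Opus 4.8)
The plan is to make precise the informal correspondence between scene-graph nodes and ground propositions, and then simply read off $f$ from $\varphi$ by applying it componentwise to object arguments. Recall from Section~\ref{sec:scene-graph} that in a scene graph $(\{O \cup P\}, E)$ the object nodes $O$ are in bijection with the PDDL objects, and each proposition node in $P$ carries its predicate name as an attribute and is joined by edges to its argument objects, each edge labelled with an argument position. Since $\varphi$ is a type-preserving bijection it restricts to a bijection $\varphi|_{O}$ from the object nodes of the first graph to those of the second; and since the two problems share a PDDL domain, the set of predicate symbols is common to both, so $\varphi$, being attribute-preserving on proposition nodes and edge labels, acts as the identity on predicate names. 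This is the sense in which $\varphi(q) = q$: $\varphi$ sends a proposition node with predicate attribute $q$ to one with the same attribute $q$.

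Next I would define $f$ directly rather than ``transport'' it. Identify $\propositionSymbolSpace[a]$ with the set of tuples $\langle q, o_1, \ldots, o_k \rangle$ obtained by instantiating each shared predicate $q$ (of arity $k$) with every $k$-tuple of objects of the first problem, and likewise for $\propositionSymbolSpace[b]$. Put
\[
  f(\langle q, o_1, \ldots, o_k \rangle) = \langle q, \varphi(o_1), \ldots, \varphi(o_k) \rangle .
\]
Well-definedness is immediate: each $\varphi(o_i)$ is an object of the second problem and $q$ is a predicate of the shared domain with the correct arity, so the image lies in $\propositionSymbolSpace[b]$. Bijectivity follows from $\varphi|_{O}$ being a bijection: the componentwise map built the same way from the isomorphism $\varphi^{-1}$ is a two-sided inverse of $f$. (Equivalently, injectivity comes from injectivity of $\varphi|_{O}$ slot by slot, and surjectivity from surjectivity of $\varphi|_{O}$ together with the fact that every predicate/arity combination is realised on both sides.)

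Finally I would verify that this $f$ is compatible with the action of $\varphi$ on the proposition nodes that actually occur in the scene graphs, which is what the later proofs invoke. Take a proposition node $v$ representing $\propositionSymbol = \langle q, o_1, \ldots, o_k \rangle$: it has predicate attribute $q$ and, for each $i$, an edge with position attribute $i$ to the object node $o_i$. Because $\varphi$ preserves node and edge types, predicate attributes, and position attributes, $\varphi(v)$ is exactly the node with predicate attribute $q$ whose $i$-th argument edge points to $\varphi(o_i)$, i.e.\ the node representing $f(\propositionSymbol)$. Thus $\varphi$ restricted to the proposition nodes present in a scene coincides with $f$ on the propositions they represent, and $f$ extends this correspondence to all of $\propositionSymbolSpace[a]$.

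I expect the only genuine subtlety to be this last point together with the bookkeeping of arity and argument order. The scene graph contains proposition nodes only for the propositions explicitly listed in the problem file (or added by \texttt{fullySpecify}), not for all of $\propositionSymbolSpace$, so $f$ has to be \emph{defined} on the full ground set from $\varphi|_{O}$; the argument above shows this extension exists, is a bijection, and agrees with $\varphi$ wherever both are meaningful. Matching each argument slot to the correct edge via its position attribute is the one place where care is needed to avoid an off-by-one error or a slot mismatch.
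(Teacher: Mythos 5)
Your proposal is correct and takes essentially the same approach as the paper: both define $f$ componentwise by applying $\varphi$ to the argument objects of each ground proposition and assert bijectivity from the bijectivity of $\varphi$. You fill in details the paper's one-line proof omits (that $\varphi$ fixes predicate attributes rather than acting on them, that $f$ must be extended from the listed proposition nodes to all of $\propositionSymbolSpace$, and the explicit two-sided inverse), which strengthens rather than changes the argument.
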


\begin{proof}
    Take any proposition $\propositionSymbol \in \propositionSymbolSpace$ and identify its predicate $q$ and the ordered set of objects it acts on, $O = \langle o_1, o_2, \ldots, o_k \rangle$.
    Then, we can transform the proposition $\propositionSymbol$ into the ordered tuple $\propositionSymbol = \langle q, o_1, o_2, \ldots, o_k \rangle$.
    Finally, we can build $f(\propositionSymbol) = \langle \varphi(q), \varphi(o_1), \varphi(o_2), \ldots, \varphi(o_k) \rangle$, which is a bijection over the propositions in $L$.
\end{proof}

\section{Examples of Algorithm 1}
\label{app:examples}

\begin{figure*}[h!] 
    \centering
    \includegraphics[width=\linewidth]{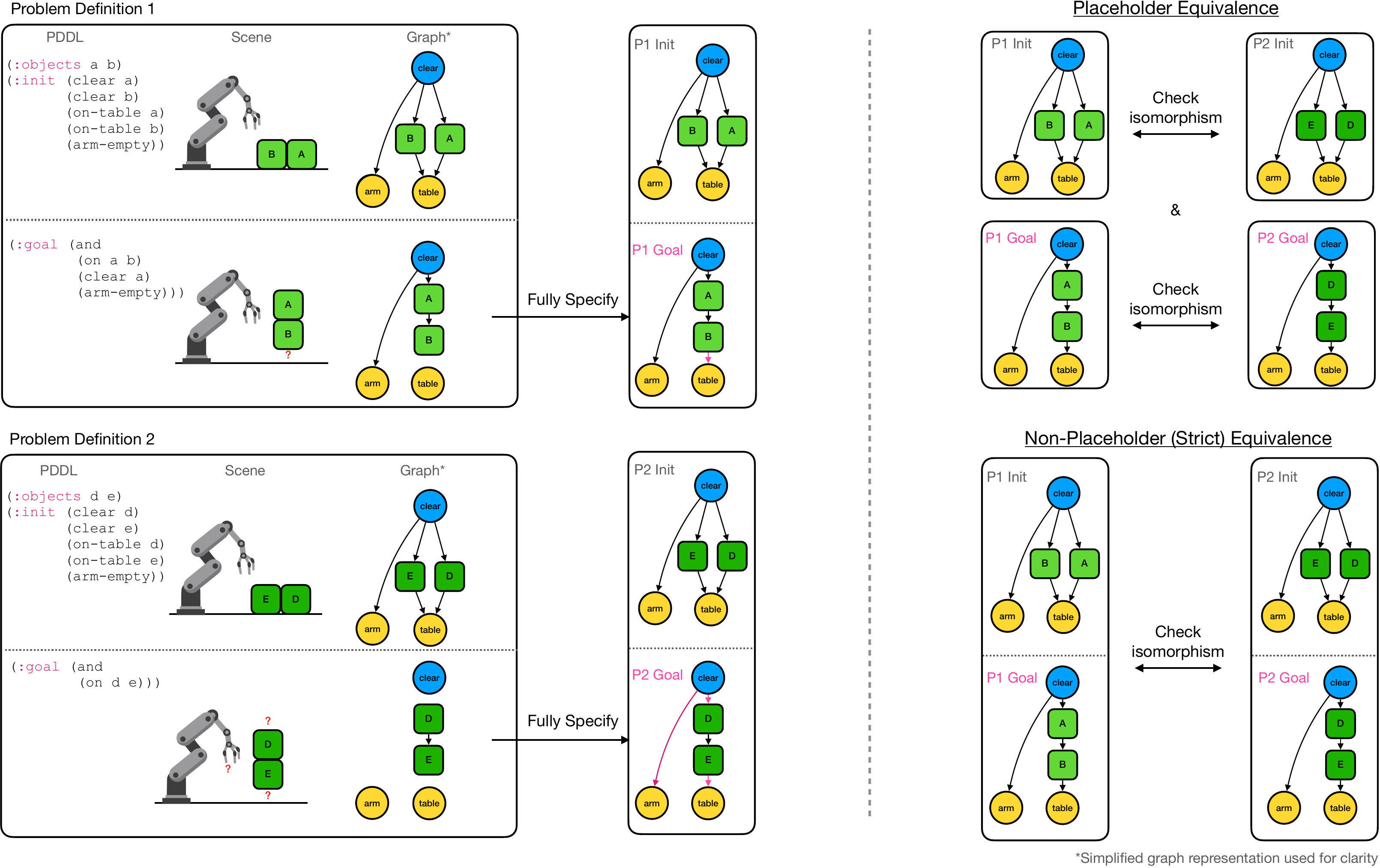}
    \caption{An illustration of the algorithm to check if two PDDL problems are equivalent. It shows each of the stages of the algorithm: transforming to scene graphs, fully specifying the goal propositions, and checking for graph isomorphism.}
    \label{fig:algorithm-example}
\end{figure*}

Figure~\ref{fig:algorithm-example} shows examples of how Algorithm~\ref{alg:planning-problem-equivalency} checks whether two PDDL problems are equivalent.

\section{Implementation of \texttt{fullySpecify}}
\label{app:implementation}

The \texttt{fullySpecify} function adds trivial edges (edges that must exist but are not currently present) to our scene graphs. We build the \texttt{fullySpecify} function using our domain knowledge of the Blocks World and Gripper domains.

\paragraph{Blocks World}
The following helpful facts are true about Blocks World:
\begin{itemize}
    \item If all blocks have its behavior above it defined (they are either \textbf{\texttt{clear}}, have something \textbf{\texttt{on}} top of it, or is being held), then any block that does not have its bottom behavior specified, must be on the table (\textbf{\texttt{on-table}}).
    \item If all blocks have its behavior beneath it defined (they are either \textbf{\texttt{on}} top of something else, are on the table (\textbf{\texttt{on-table}}), or are being held), then any block that does not have its behavior above it defined must be \textbf{\texttt{clear}}.
    \item Following the last two facts: if there is a block \textbf{A} which has its behavior above it unspecified and block \textbf{B} which has its behavior below it unspecified, then the only way that (\textbf{\texttt{on B A}}) can exist is if there isn't already a chain of predicates that leads \textbf{A} to \textbf{B} or vice versa.
    If there already exists a chain connecting these two blocks, that means \textbf{A} and \textbf{B} are already connected by being in the same tower, and that they are the top and bottom of the tower they are a part of.
    \item If there are no ``floating'' blocks (blocks that have both its top and bottom behavior defined), and the arm's behavior is undefined (neither \textbf{\texttt{arm-empty}} nor \textbf{\texttt{holding}}) is present, then the arm must be empty.
    This is because there is no block that the arm could possibly hold here.
\end{itemize}

Using these rules, we can add all possible trivial edges, until no more trivial edges can be found.
Further, these facts will discover all possible trivial edges in Blocks World, meaning the fully specified scene graph will be in its canonical form after \texttt{fullySpecify}.

\paragraph{Gripper}
Similarly, we can build a set of facts that operate on the Gripper domain:

\begin{itemize}
    \item If all balls are assigned a room (\textbf{\texttt{at ball room}}), then all unassigned grippers (no (\textbf{\texttt{free gripper}}) or (\textbf{\texttt{carry ball gripper}})) must be \textbf{\texttt{free}}.
    \item If there is only one room, and if all grippers are already specified (either (\textbf{\texttt{free}}) or (\textbf{\texttt{carry ball gripper}})), then any unspecified balls must be in the only room that exists.
    \item If there is only one room, the robby must be in it (\textbf{\texttt{at-robby onlyRoom}}).
\end{itemize}

These are the only rules we can find.
This is because, if a ball's position is unspecified, it can always be assigned to any arbitrary room.
If the robby is unspecified (no \textbf{\texttt{at-robby}}), then it too, can always be specified to any arbitrary room.

\paragraph{Floor Tile} In the standard Floor Tile, we know that:
\begin{itemize}
    \item A tile cannot be unpainted.
    \item All available colors (\textbf{\texttt{available-color color}}) are unchangeable and cannnot be added.
    \item Relative positional propositions (\textbf{\texttt{right} tile1 tile2}) and (\textbf{\texttt{up} tile1 tile2}) are unchangeable and cannot be added.
\end{itemize}

Further, given our simplifications, we have the following characteristics in our domain:
\begin{itemize}
    \item Traversal is bidirectional.
    \item Painting a tile and other robots do not block path-finding.
    \item From the last two facts, robots cannot block themselves off through actions, so the only way the position of a robot is fixed is if:
    \begin{enumerate}
        \item It is explicitly defined in the goal description.
        \item The robot is on an isolated tile to begin with.
    \end{enumerate}
\end{itemize}

Applying these rules allows us to fully specify a Floor Tile goal description.

\section{Dataset Details}
\label{app:dataset-details}
To obtain interesting problems for each domain, we crafted a set of tasks instead of randomly generating instances. 
Below are descriptions of each domain and their corresponding task configurations.

\begin{figure}
    \centering
    \includegraphics[width=\linewidth]{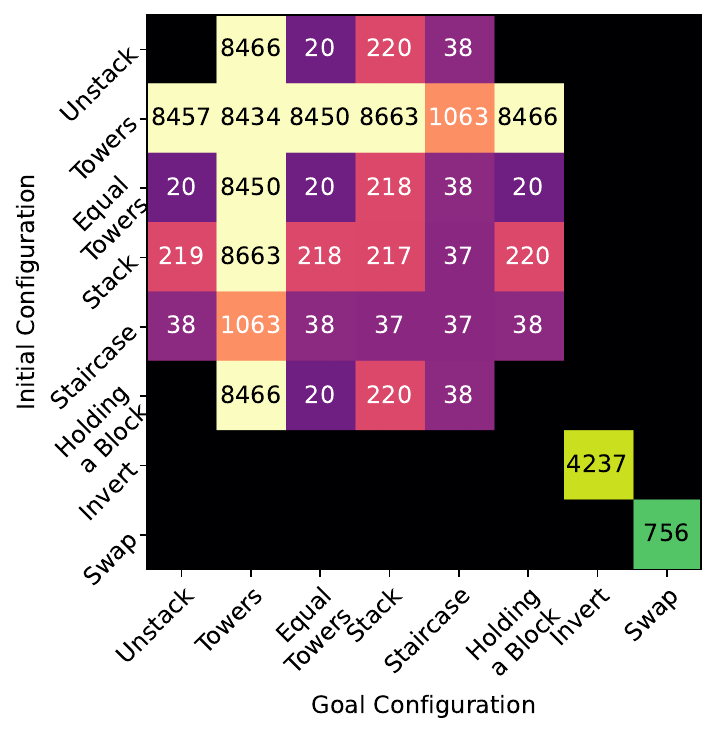}
    \caption{A visualization of the different tasks in Blocks World. Training set displayed in ``magma'' color map (purple to yellow). Test set displayed in ``viridis'' color map (blue to yellow).}
    \label{fig:blocksworld-tasks}
\end{figure}

\noindent \textbf{Blocks World.}
This domain involves using a robotic hand to manipulate a set of blocks arranged in various configurations on a table.  Blocks World configurations:
\begin{itemize}
    \item \textbf{Stacked}: All the blocks are in a single tower.
    \item \textbf{Unstacked}: All blocks are separately placed on the table.
    \item \textbf{Holding a Block}: All blocks except one are separately placed on the table, the robot holds the last block.
    \item \textbf{Staircase}: Towers are in incrementing heights (1, 2, 3, $\ldots$).
    \item \textbf{Equal Towers}: Blocks are in towers of equal height.
    \item \textbf{Swap} (tied): Goal and initial configurations are tied: given two towers, swap the base blocks, and leave the rest of the structure unchanged.
    \item \textbf{Invert} (tied): Goal and initial configurations are tied: Given a set of stacks, flip each stack.
    \item \textbf{Towers}: Build towers to match a specified list of heights.
\end{itemize}

\begin{figure}
    \centering
    \includegraphics[width=\linewidth]{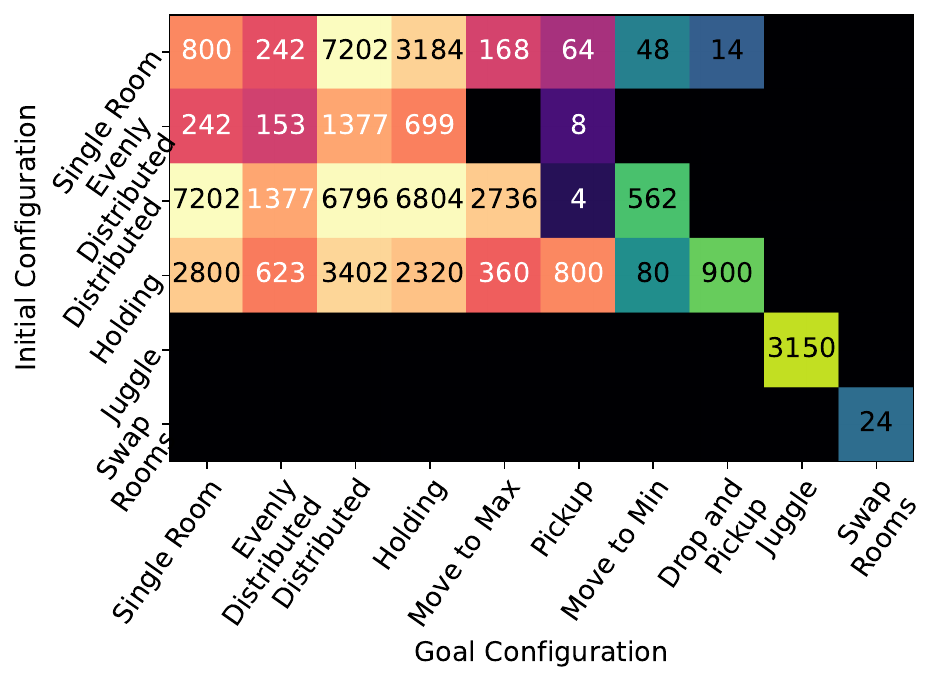}
    \caption{A visualization of the different tasks in Gripper. Training set displayed in ``magma'' color map (purple to yellow). Test set displayed in ``viridis'' color map (blue to yellow).}
    \label{fig:gripper-tasks}
\end{figure}

\noindent \textbf{Gripper.} 
This domain features a robot equipped with grippers, tasked with transporting balls from one room to another. Gripper Configurations:
\begin{itemize}
    \item \textbf{Single Room}: All balls are in one room.
    \item \textbf{Evenly Distributed}: Each room has the same number of balls.
    \item \textbf{Distribute}: Each room has a predefined number of balls.
    \item \textbf{Swap Rooms} (tied): Goal and initial configurations are tied: swap ball locations between two rooms.
    \item \textbf{Move to Max} (goal only): Move all balls to the room that started with the most number of balls.
    \item \textbf{Move to Min} (goal only): Move all balls to the room that started with the fewest balls.
    \item \textbf{Pickup} (goal only): Pick up all the balls
    \item \textbf{Drop and Pickup} (goal only): Drop all the balls in the first room, and pick up all the balls that started in rooms.
    \item \textbf{Juggle} (tied): Goal and initial configurations are tied: given that the robot is holding some balls, shuffle which arm holds which ball.
\end{itemize}

\begin{figure}
    \centering
    \includegraphics[width=\linewidth]{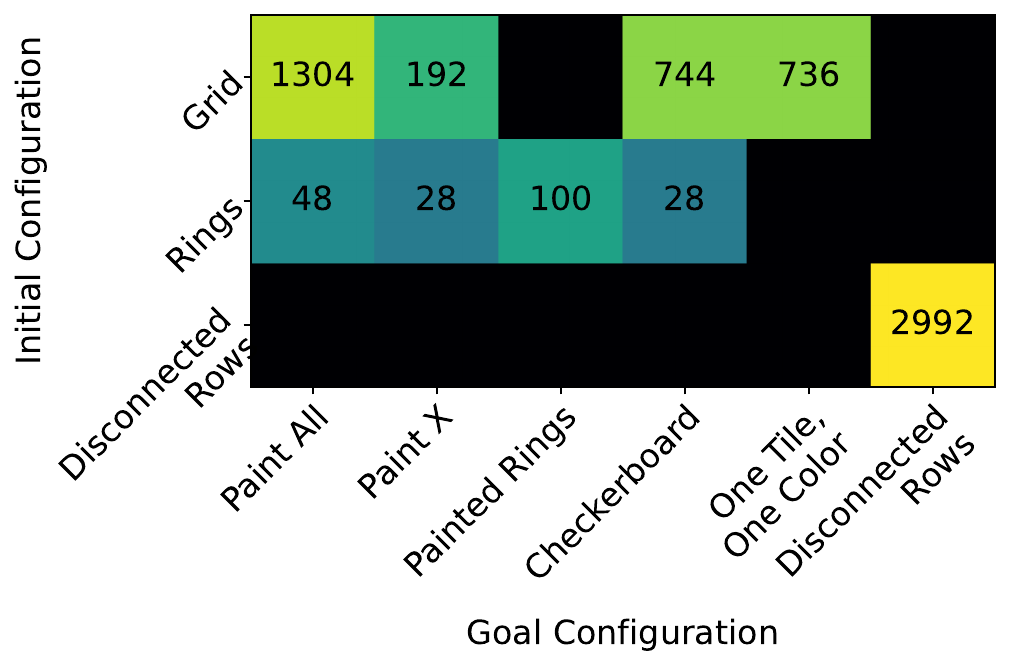}
    \caption{A visualization of the different tasks in Floor Tile. Training set displayed in ``magma'' color map (purple to yellow). Test set displayed in ``viridis'' color map (blue to yellow).}
    \label{fig:floor-tile-tasks}
\end{figure}

\noindent \textbf{Floor Tile.} 
This domain features a group of robots tasked with painting some tiles. Floor Tile configurations:
\begin{itemize}
    \item \textbf{Grid} (initial only): Tiles are arranged in a grid.
    \item \textbf{Rings} (initial only): Tiles are arranged in a grid in concentric rings.
    \item \textbf{Disconnected Rows} (tied): Goal and initial configurations are tied: tiles are in rows, but the rows are disconnected from one another.
    The goal is to paint both ends of each row. 
    \item \textbf{Painted Rings} (goal only): Tiles are arranged in a grid in concentric rings, and each ring is painted.
    \item \textbf{Paint All} (goal only): All tiles are painted with one color.
    \item \textbf{Painted X} (goal only): The robot must paint an X across all the tiles.
    \item \textbf{One Tile, One Color} (goal only): Each tile is painted a different color.
    \item \textbf{Checkerboard} (goal only): The tiles are painted in a checkerboard pattern with two colors.
\end{itemize}

\section{License Information}
\label{app:licence}
The data for \benchmarkName\ is available on HuggingFace\footnote{\href{https://huggingface.co/datasets/BatsResearch/planetarium}{\texttt{hf.co/datasets/BatsResearch/planetarium}}}
and released under a Creative Commons CC-BY-4.0 license.
In addition, all code to create the dataset and evaluate models on the benchmark is available on GitHub\footnote{\href{https://github.com/BatsResearch/planetarium}{\texttt{github.com/BatsResearch/planetarium}}}
and released under a BSD 3 license.
The authors are responsible for the content of the dataset.

In our study, we use the Blocksworld, Gripper, and Floor Tile domains, which are sourced from the IPC~\cite{MCDERMOTT-2000-IPC}.
The specific domain files used in our research are obtained from \texttt{pddl-generators} ~\cite{SEIPP-2022-PDDL-GEN}, which is distributed under the GNU General Public License.
The domain file for Floor Tile is a modified version of the one found from \texttt{pddl-generators} ~\cite{SEIPP-2022-PDDL-GEN}

GitHub Copilot was used to assist in writing the code for this paper.

\section{Fine-tuning}
\begin{table}[H]
    \captionsetup{skip=10pt}
    \centering
    \begin{tabular}{l||l}
    \toprule
    Hyperparameters & Value\\
    \midrule
        Optimizer &adamw\_torch\\
        Learning rate & 2e-5\\
        Batch Size & 1\\
        Betas & (0.9, 0.999)\\
        Epsilon & 1e-8\\
        Weight Decay & 0.01\\
    \midrule
        Max Sequence Length & 1500\\
    \midrule
        LoRA rank  &16\\
        LoRA alpha &32\\
        LoRA Dropout &0.05\\
    \bottomrule
    \end{tabular}
    \caption{Training hyperparameters when fine-tuning all models included in the paper. The hyperparameters are separated into three parts: supervised fine-tuning, generation-related, and LoRA-related. }
    \label{tab:hyperparams}
\end{table}

We fine-tuned the open-weight models using QLoRA with a rank of $16$, adhering to the hyperparameter recommendations for small models provided by the original authors \cite{dettmers2023qlora}. Models were loaded with 4-bit precision and trained over the training set for a single epoch. Fine-tuning for each model used either two NVIDIA GeForce RTX 3090 GPUs or two NVIDIA A6000 GPUs, operating with data parallelization for approximately 15 hours. We truncate the longest $5\%$ of our training dataset due to GPU memory constraints.

\label{app:experiments}
Table \ref{tab:hyperparams} displays the hyperparameters used for fine-tuning across all models. 
In addition, models are loaded using 4-bit NF4 quantization with double quantization to reduce the average memory footprint. We also use the bfloat16 compute data type for faster training.

\section{Maintenance Plan}
\label{app:maintenance-plan}

The research group that created \benchmarkName\ will maintain the \benchmarkName\ dataset and benchmark with an open GitHub repository and issue submission system, and the dataset will be hosted on HuggingFace. 
The maintenance plan includes regular issue tracking, with reviews and categorization of issues, aiming to resolve high-priority issues within a week and scheduling minor updates quarterly.  
Comprehensive documentation and automated testing will ensure quality and compatibility.

\onecolumn
\section{Text-to-PDDL Pair Example}
\label{app:text-to-PDDL-pair}
\begin{tcolorbox}[collower=white,
    valign lower=top,
    toptitle=1.5mm,
    bottomtitle=1.5mm,
    title={BLOCKS WORLD}]\textbf{Ground Truth PDDL}
    \begin{lstlisting}
    (define (problem equal_towers_to_equal_towers_5)
        (:domain blocksworld)
        (:requirements :strips)
        (:objects b1 b2 b3 b4 b5)
        (:init (arm-empty) 
               (clear b5) 
               (on b2 b1) 
               (on b3 b2) 
               (on b4 b3) 
               (on b5 b4) 
               (on-table b1))
        (:goal (and (arm-empty) 
                    (on-table b1) 
                    (on b2 b1) 
                    (on b3 b2) 
                    (on b4 b3) 
                    (on b5 b4) 
                    (clear b5)))
    )
    \end{lstlisting}
    \textbf{Natural Language Description} \\
    {\color{red} (objects:} You have 5 blocks, b1 through b5, stacked into 1 towers of equal heights, and your arm is empty.
    {\color{red} )}
    {\color{olive} (abstract init:} stacked into 1 towers of equal heights, and your arm is empty.
    {\color{olive} | explicit init:} Your arm is empty. b1 is on the table. b2 is on b1. b3 is on b2. b4 is on b3. b5 is on b4. b5 is clear.
    {\color{olive} )}
    Your goal is to have the following: 
    {\color{blue} (abstract goal:} stack the blocks into 1 towers of equal heights.
    {\color{blue} | explicit goal:} Your arm should be empty. b1 should be on the table. b2 should be on b1. b3 should be on b2. b4 should be on b3. b5 should be on b4. b5 should be clear.
    {\color{blue} ) }
\end{tcolorbox}
\twocolumn

\end{document}